\def\eqref#1{equation~\ref{#1}}
\def\1{\bm{1}}
\DeclareMathAlphabet{\mathsfit}{\encodingdefault}{\sfdefault}{m}{sl}
\SetMathAlphabet{\mathsfit}{bold}{\encodingdefault}{\sfdefault}{bx}{n}
\algrenewcommand\algorithmicrequire{\textbf{Input:}}
\algrenewcommand\algorithmicensure{\textbf{Output:}}
\definecolor{deepblue}{rgb}{0,0,0.5}
\definecolor{deepred}{rgb}{0.6,0,0}
\definecolor{deepgreen}{rgb}{0,0.5,0}
\definecolor{deeporange}{rgb}{0.6,0.25,0}
\definecolor{verylightgray}{rgb}{0.97,0.97,0.97}
\definecolor{mydarkblue}{rgb}{0,0.08,0.45}
\definecolor{bluegray}{rgb}{0.3,0.3,0.6}
\newtheorem{theorem}{Theorem}[section]
\title{Forte : Finding Outliers with Representation Typicality Estimation}
\author{%
  Debargha Ganguly\thanks{This research was supported in part by NSF Awards 2117439 and 2112606.} $^1$, 
  Warren Morningstar$^{2}$, 
  Andrew Yu$^1$, 
  Vipin Chaudhary$^1$ \\
  $^1$Case Western Reserve University, Cleveland, OH, USA \\
  $^{2}$Google Research, Mountain View, CA, USA \\
  \texttt{\{debargha,asy51,vipin\}@case.edu, wmorning@google.com}
}
\begin{document}
\maketitle

% \frenchspacing
\begin{abstract}

Generative models can now produce photorealistic synthetic data which is virtually indistinguishable from the real data used to train it. This is a significant evolution over previous models which could produce reasonable facsimiles of the training data, but ones which could be visually distinguished from the training data by human evaluation.  Recent work on OOD detection has raised doubts that generative model likelihoods are optimal OOD detectors due to issues involving likelihood misestimation, entropy in the generative process, and typicality. We speculate that generative OOD detectors also failed because their models focused on the pixels rather than the semantic content of the data, leading to failures in near-OOD cases where the pixels may be similar but the information content is significantly different. We hypothesize that estimating typical sets using self-supervised learners leads to better OOD detectors. We introduce a novel approach that leverages representation learning, and informative summary statistics based on manifold estimation, to address all of the aforementioned issues.  Our method outperforms other unsupervised approaches and achieves state-of-the art performance on well-established challenging benchmarks, and new synthetic data detection tasks.

\end{abstract}
\section{Introduction}

In the past decade, deep learning has made significant strides, primarily due to the availability of large-scale annotated datasets \citep{deng2009imagenet} used in supervised learning and the emergence of self-supervised learning utilizing vast web-scale crawled open data \citep{schuhmann2022laion, gao2020pile, sharma-etal-2018-conceptual, radford2021learning}. The transition from large-scale annotated datasets to self-supervised learning was driven by the expensive and labor-intensive nature of creating these datasets and yet the concerns surrounding data usage rights persist \citep{he2022synthetic, carlini2021extracting, huang2022large}. Recent advancements have led to the development of generative models that excel in generating highly realistic and detailed synthetic images \citep{stein2024exposing}. In this paper, we investigate identifying out-of-distribution (OOD) data and generated synthetic data created using large-scale pre-trained generative models, commonly referred to as ``foundation models''\citep{bommasani2021opportunities}.

Broadly speaking, data encountered during deployment that was not sampled from the distribution used to generate training data is considered OOD.  OOD data represents a challenge to safe deployment of predictive models because they can make confident incorrect predictions, leading to actions which could have negative outcomes.  Foundation models complicate the traditional definition of OOD slightly.  These models are trained on extensive and diverse datasets, making the data generating process, and thus possible OOD inputs difficult to specify cleanly.  Despite its diversity, the data generating process only samples a small portion of the input space, leaving many potential subspaces open for OOD contamination. This contamination can erode the calibration of predictive models trained using the foundation model as a base, representing a significant hazard for safe model deployment.

Predictive model failures due to OOD inputs can be understood through the lens of typicality.  The concept of typicality arises from information theory and codifies the difference between likelihood from a generative process, and the probability of generating a sample from that process with a particular likelihood.  In low dimensions these two are generally equivalent, but this is often not the case in high dimensions \citep{nalisnick2019detecting, choi2018waic}.  The asymptotic equipartition property asserts that most of the probability mass of a distribution is contained in the region of high typicality, referred to as the "typical set" of a distribution \citep{cover1999elements}.  Thus, OOD inputs confound predictive models because the model has no incentive to make calibrated predictions on this data, since any errors in the predictive model for OOD inputs do not contribute to the risk to the hypothesized model, which is being minimized via training. At the same time, the atypicality of OOD data presents a potential way to identify them \citep{nalisnick2020detecting, morningstar2021density}, and thus avoid making risky predictions.

Measuring typicality directly is challenging because one cannot typically query the probability of an observed datum under the (unknown) data generating process.  This has led many prior works to propose using generative models to approximate the process and use it to measure the probability, and therefore test the typicality of the input. However, \citet{zhang2021out} and \citet{caterini2022entropic} have pointed out challenges in this approach that arise due to likelihood misestimation, which can cause OOD inputs to have likelihoods under the generative model that are consistent with ID data.

% In this paper, we present the following key contributions:

In this paper, we hypothesize that many of the shortcomings with typicality-based approaches could be addressed using statistics which tune to the semantic content of the data.  We propose to leverage self-supervised representations, which extract semantic information while discarding many potential confounding features (e.g. textures, backgrounds). Our specific contributions are:

\begin{enumerate}
    \item Forte, a novel framework combining diverse representation learning techniques (CLIP, ViT-MSN, and DINOv2) for typicality estimation, uses both parametric (GMM) and non-parametric (KDE, OCSVM) density-based methods to detect atypical samples, i.e., out-of-distribution (OOD) and synthetic images generated by foundation models. Forte requires no class labels, exposure to OOD data during training, or restrictions on the architecture of generative models.
    \item A set of per-point summary statistics (precision, recall, density, and coverage) that effectively capture the ``probability distribution of the representations'' using reference and unseen test samples in the feature space, enabling more nuanced anomaly detection.
    \item Extensive experiments demonstrating Forte's superior performance compared to state-of-the-art supervised and unsupervised baselines on various OOD detection tasks, and synthetic image detection, including photorealistic images generated by advanced techniques like Stable Diffusion.
    \item Insights into the limitations of relying solely on statistical tests and distribution-level metrics for assessing the similarity between real and synthetic data, highlighting the effectiveness of our proposed per-point summary statistics and anomaly detection framework.
\end{enumerate}

\section{Related Works}

In discriminative machine learning, the assumption that inference data mirrors the training data distribution is foundational, yet often flawed. The occurrence of out-of-distribution (OOD) inputs can lead to misleadingly confident but incorrect predictions by models, posing significant reliability and safety concerns. Large neural networks are vulnerable to adversarial perturbations \citep{szegedy2013intriguing} and poor calibration \citep{guo2017calibration}, necessitating OOD detection.  OOD detection methods are either supervised, using labels and examples to calibrate models or train them to identify OOD data \citep{liang2018enhancing,hendrycks2019deep,meinke2020towards,dhamija2018reducing}, or unsupervised, employing generative models to approximate the training data density $q(\mathbf{X})$ and determine prediction reliability via $q(Y|\mathbf{X})$, assuming OOD inputs have lower probability \citep{bishop1994novelty}.

\textbf{Unsupervised OOD detection methods} typically use generative models to measure the likelihood of the data. \citep{bishop1994novelty} assumed low likelihood would be observed on OOD data, but this may fail in high dimensionality \citep{choi2018waic, nalisnick2019detecting, hendrycks2019deep, serra2019input}. Fixes include using WAIC \citep{choi2018waic}, likelihood ratios \citep{ren2019likelihood}, or typicality tests \citep{nalisnick2019detecting}, but these have limitations associated with high dimensional likelihoods. \citet{morningstar2021density} proposed the Density of States Estimator (DoSE). Inspired by principles from statistical physics, it leverages multiple summary statistics from generative models to distinguish between in-distribution and OOD data.

\textbf{Supervised OOD detection} leverages labeled in-distribution and/or known OOD data. Techniques include using VIB rate \citep{alemi2018uncertainty}, calibrating predictions with ODIN \citep{liang2018enhancing} or its generalization \citep{hsu2020generalized}, ensembling classifiers \citep{lakshminarayanan2017simple}, training with OOD examples to encourage uniform confidence \citep{hendrycks2019deep}, adversarial attacks to lower confidence near training data \citep{stutz2019confidence}, visual concept networks to model human understandable features in graphs \citep{ganguly2024visual} and fitting Gaussian mixtures to latent representations \citep{meinke2020towards}. While effective, these methods require labels or specific outliers.

\textbf{Generative model assessment} can be done over four categories as shown by \citet{stein2024exposing}: \textit{ranking}, \textit{fidelity}, \textit{diversity}, and \textit{memorization}. Ranking metrics include FD \citep{heusel2017gans}, which measures the Wasserstein-2 distance between real and generated distributions; FD$_\infty$ \citep{fid_infinity}, which removes bias due to finite sample size; sFID \citep{sfid}, which uses a spatial representation; KD \citep{kid}, a proper distance between distributions; IS \citep{is}, which evaluates label entropy and distribution; and FLS \citep{fls}, a density estimation method. Fidelity and diversity are measured using precision, recall \citep{precision-recall-orig,precision-recall}, density, coverage \citep{density-coverage}, rarity score \citep{han2022rarity}, and Vendi score \citep{friedman2022vendi}, which quantify the quality and variety of generated samples. Memorization is quantified using AuthPct \citep{authpct}, $C_T$ score \citep{ct}, FLS-POG \citep{fls}, and memorization ratio with calibrated $l_2$ distance \citep{memorization-2}, which detect overfitting and copying of training data. These metrics can be computed in various supervised (including Inception-V3 \citep{inception}) and self-supervised representation spaces, including self-supervised model families like contrastive (SimCLRv2 \cite{simclr}), self-distillation (DINOv2 \cite{DINOv2}), canonical correlation analysis (SwAV \cite{swav}), masked image modelling (MAE \cite{mae}, data2vec \cite{data2vec}), DreamSim \cite{fu2023dreamsim}, and language-image (CLIP \cite{clip}, sometimes using the OpenCLIP implementation \cite{openclip} trained on DataComp-1B \cite{datacomp}).

\begin{figure*}[tbp]
    \centering
    \includegraphics[width=1\linewidth]{./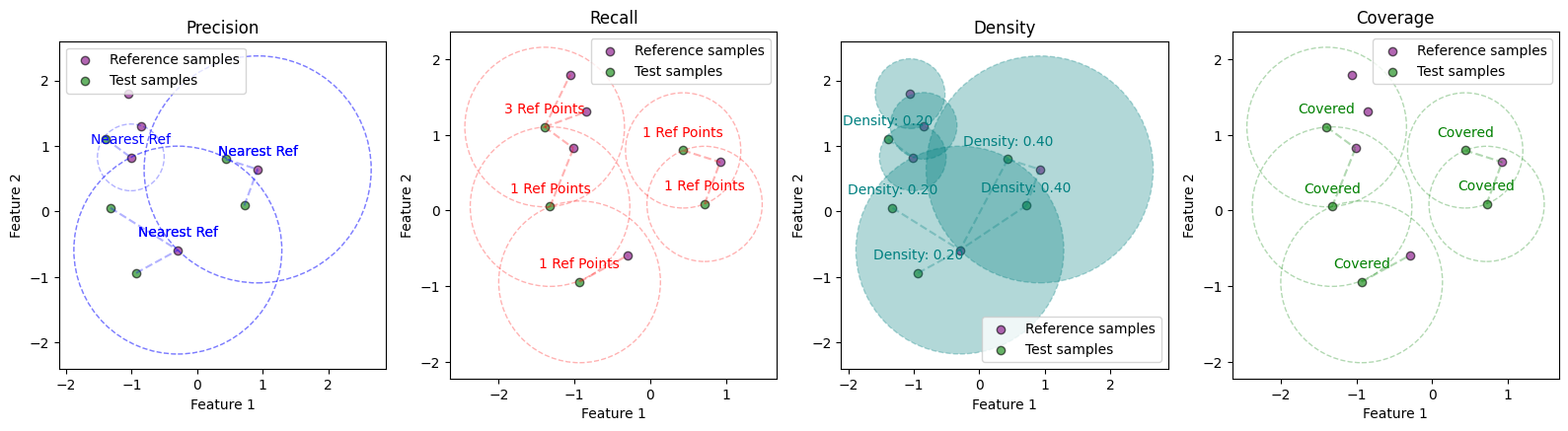}
    \caption{Visualization of Precision, Recall, Density, and Coverage metrics for reference and test samples in a 2D feature space, with nearest neighbour k=1. Precision is depicted by blue circles around reference samples, recall by red circles around test samples, density by solid teal circles around reference samples, and coverage by green circles around test samples.}
    \label{fig:prdc-calculation}
\end{figure*}

\section{Forte: A Framework for OOD Detection Using Per-Point Metrics}

In this section, we introduce Forte, a novel framework that combines diverse representation learning techniques with per-point summary statistics and non-parametric density estimation models to detect out-of-distribution (OOD) and synthetic data.

\subsection{Problem Setup}
We start with a dataset $X = \{x_i^r\}_{i=1}^m$ that comes from a true but unknown distribution $p$, where each $x_i\in \mathbb{R}^{d}$ is sampled independently and identically from $p$. In the context of out-of-distribution detection, we consider that the unseen data $\{ x_j^g \}_{j=1}^n$ might come from a mix of the true distribution $p$ and an unknown confounding distribution $\tilde{p}$ (which can be from an OOD benchmark or synthetic data generated using a model such as Stable Diffusion). The mixed distribution is denoted by $\grave{X} \sim \alpha p(\grave{X}) + (1 - \alpha)\tilde{p}(\grave{X})$, where $\alpha$ is an unknown mixing parameter. Since both $\alpha$ and $\tilde{p}$ are unknown, we cannot directly obtain samples from $\tilde{p}$ or make any assumptions about $\alpha$ and $\tilde{p}$. The goal of OOD detection is to develop a decision rule to determine when the input data $\grave{X}$ is atypical.

We build on Density of States Estimation \citep[DoSE;][]{morningstar2021density}, which is the SoTA unsupervised method that addresses the OOD detection problem. The first step in DoSE is to create a \emph{summary statistic} $\boldmath{T_n}$ that can help evaluate new, unseen data. Some examples of $T_n$ are the negative log-likelihood of $X$, the $L_2$ distance between $X$ and the sample mean, or the maximum likelihood estimate of a joint distribution $q(Y|X, \theta_n)$. To determine when the input data $\grave{X}$ is atypical, DoSE models the distribution of $T$ values in the training data and uses the density to score the typicality of inputs based on the measured values of their statistics. While DoSE gives a valuable mechanism by which one can determine the typicality of a particular input, it does not identify which statistics are informative. DoSE further limited its focus to likelihood-based generative models, which \citet{caterini2022entropic} showed may be suboptimal for OOD detection.

\subsection{Creating Generalized Novel Summary Statistics}
\label{novel-summary-stats}

We propose the following \emph{per-point metrics}, novelly adopted to OOD detection from previous work in manifold estimation inside representation spaces by \citet{naeem2020reliable} and \citet{kynkaanniemi2019improved} for capturing different facets of the generated samples. Let $\mathds{1}(\cdot)$ be the indicator function, $S(\{x_i^r\}_{i=1}^m) = \bigcup_{i=1}^m B(x_i^r, \mathrm{NND}_k(x_i^r))$, where $B(x, r)$ is a Euclidean ball centered at $x$ with radius $r$, and $\mathrm{NND}_k(x_i^r)$ is the distance between $x_i^r$ and its $k$-th nearest neighbor in $\{x_i^r\}_{i=1}^m$, excluding itself.

\begin{enumerate}

    \item \textbf{Precision per point} is a binary statistic that indicates whether each test point is within the nearest neighbor distance of any reference point:\begin{equation}\mathrm{precision_{pp}^{(j)}} = \mathds{1}\left(x_j^g \in S(\{x_i^r\}_{i=1}^m)\right).\end{equation} A high overall precision indicates that the test samples are closely aligned, and similar to the reference data distribution.
    
    \item \textbf{Recall per point} is computed for each test point by counting the number of reference points within its nearest neighbor distance and dividing by the total number of reference points:\begin{equation}\mathrm{recall_{pp}^{(j)}} = \frac{1}{m} \sum_{i=1}^m \mathds{1}\left(x_i^r \in B(x_j^g, \mathrm{NND}_k(x_j^g))\right).\end{equation} A high recall implies that the test distribution collectively covers a significant portion of the reference data distribution, i.e. that the test distribution contains diverse samples that represent the different regions of the reference data distribution.
    
    \item \textbf{Density per point} is computed for each test point by counting the number of reference points for which it is within that points nearest neighbor distance and dividing by the product of $k$ and the total number of reference points:\begin{equation} \mathrm{density_{pp}^{(j)}} = \frac{1}{km} \sum_{i=1}^m \mathds{1}\left(x_j^g \in B(x_i^r, \mathrm{NND}_k(x_i^r))\right). \end{equation} A test point with high density suggests that it is located in a region of high probability density in the reference data distribution, i.e. it captures the underlying density of the reference data. Density per point essentially measures the expected likelihood of test points against the reference manifold, providing a more informative measure than the binary precision per point.
    \item \textbf{Coverage per point} is computed for each test point by checking if its distance to the nearest reference point is less than its own nearest neighbor distance: \begin{equation} \mathrm{coverage_{pp}^{(j)}} = \mathds{1}\left(\min_{i}  (d(x_j^g, x_i^r)) < \mathrm{NND}_k(x_j^g)\right). \end{equation} High coverage indicates that the test samples are well-distributed and cover the support of the reference data distribution. Coverage per point effectively improves upon the original recall metric by building manifolds around reference points, making it more robust to outliers in the test data and computationally efficient.
\end{enumerate}

These per-point metrics serve as summary statistics that capture local geometric properties of the data manifold in the feature space. They enable us to model the distribution of in-distribution (ID) data and identify out-of-distribution (OOD) samples effectively. We extract representations from reference and test images using CLIP \cite{radford2021learning}, which learns to map images and text into a shared latent space, self-supervised models \cite{gui2023survey} like ViT-MSN \cite{vit_msn}, which predicts similarity between masked views of the same image, and DiNo v2 \cite{DINOv2}, which distills knowledge from a teacher to a student network. Combining representations from these diverse models captures distinct aspects of the data via different manifolds and improves robustness of our anomaly detection approach, as we observe later in \autoref{tab:ablation}.

\subsection{Develop Decision Rule}
We use the summary statistics to develop an non-parametric density estimator as an anomaly detection model. First, we generate feature vectors for the reference data using self-supervised learning methods. We split the data into three parts: one-third for held-out testing, one-third as the reference distribution, and one-third as a test distribution that is drawn from the reference distribution. We calculate summary statistics for the second and third to understand what these statistics look like when the test data matches the real data (i.e. $P\overset{d}{=}Q$). We train One-Class SVM \citep{scholkopf2001estimating}, Gaussian Kernel Density Estimation \citep{parzen1962estimation}, and Gaussian Mixture Model \citep{reynolds2009gaussian} on the reference summary statistics. These models learn a decision boundary enclosing the typical set of the reference data distribution. We then test the models' ability to distinguish between a mix of held-out test and reference features by comparing their atypicality i.e. summary statistics to the reference distribution. By evaluating the models' performance on the test set, we assess their effectiveness in detecting OOD samples and distinguishing real from synthetic data distributions. 

\label{eval-metrics}
\textbf{To assess the performance} of the anomaly detection models, using an unseen part of the training real distribution and the generated data distribution, we use Area Under the Receiver Operating Characteristic Curve (AUROC), which measures the ability of the model to discriminate between normal and anomalous points across different decision thresholds, and False Positive Rate at 95\% True Positive Rate (FPR@95) which indicates the proportion of normal points incorrectly identified as anomalies when the model correctly identifies 95\% of the true anomalies.

Under certain theoretical assumptions, we can justify the effectiveness of our per-point metrics in distinguishing between in-distribution (ID) and out-of-distribution (OOD) data. Specifically, if the reference data $\{ x_j^r \}_{j=1}^m$ and the test data $\{ x_i^g \}_{i=1}^n$ are drawn from Gaussian distributions with the same covariance but different means (with a significant mean difference), the expected values of these metrics differ markedly between ID and OOD samples. For ID data, the expected per-point precision and coverage approximate $1 - e^{-k}$, the expected per-point recall is roughly $k/m$, and the expected per-point density is close to 1. In contrast, for OOD data, these expected values are near zero due to the large mean difference, which causes the test samples to fall outside the typical regions of the reference distribution. This substantial disparity provides a strong theoretical foundation for using these metrics as effective summary statistics for OOD detection. By computing these metrics for test samples and comparing them to the expected ID values, we can effectively identify anomalous data points. A detailed proof and further explanations are provided in the Appendix \ref{theoretical-proof}.

\section{Experiments}
\label{ood_baseline}
\textbf{Unsupervised Baselines :} We train an ensemble of deep generative models on in-distribution data, validate on a heldout set to ensure no memorization, and compute DoSE scores on in-distribution and OOD test sets. We measure OOD identification performance and compare against several unsupervised baselines: single-sided threshold \citep{bishop1994novelty}, single-sample typicality test (TT) \citep{nalisnick2019detecting}, Watanabe-Akaike Information Criterion (WAIC) \citep{choi2018waic}, and likelihood ratio method (LLR) \citep{ren2019likelihood}. Given the best demonstrated unsupervised OOD detection results were in \citep{morningstar2021density}, we stick to using Glow models \citep{kingma2018glow}, where we use summary statistics: log-likelihood, log-probability of the latent variable, and log-determinant of the Jacobian between input and transformed spaces. 

\textbf{Supervised Baselines:} We also compare our approach against state-of-the-art methods in out-of-distribution (OOD) detection literature, selecting the \textit{best reported result}s on established benchmarks, \textit{regardless of the model architecture or techniques} employed. Specifically, we consider results from NNGuide \citep{park2023nearest}, Virtual Logit Matching \citep{wang2022vim}, and OpenOOD v1.5 \citep{zhang2024openood}. For OpenOOD v1.5, we select the top-performing entries from the leaderboard, which utilize a Vision Transformer (ViT-B) trained with cross-entropy loss, along with Maximum Logit Score (MLS) \citep{hendrycks2019scaling} and Relative Mahalanobis Distance Score (RMDS) \citep{ren2021simple} for OOD detection. To explore the potential of foundation models in OOD detection, OpenOOD also employs a linear probing of Dinov2 \citep{oquab2023dinov2} in conjunction with MLS, however performance is demonstrated to be poor.

\subsection{Baseline Performance for Synthetic Data Distributions}

\textbf{Distribution Divergence:} We compute pairwise distances and divergences between real and generated feature distributions. First we start with KL Divergence which measures information loss when approximating $P$ with $Q$ \cite{kullback1951information}: $D_{\text{KL}}(P \parallel Q) = \sum_{x \in \mathcal{X}} P(x) \log \frac{P(x)}{Q(x)}$, and JS Divergence which is the symmetric KL divergence variant, comparing distributions with disjoint support \cite{lin1991divergence}: $D_{\text{JS}}(P \parallel Q) = \frac{1}{2} \left( D_{\text{KL}}(P \parallel M) + D_{\text{KL}}(Q \parallel M) \right), M = \frac{1}{2}(P + Q)$. The Wasserstein Distance on the other hand gives the minimum cost to transform one distribution into the other \cite{rubner2000earth}: $W(P, Q) = \inf_{\gamma \in \Gamma(P, Q)} \mathbb{E}{(x, y) \sim \gamma} [| x - y |], \Gamma(P, Q)$: joint distributions with marginals $P, Q$, while the Bhattacharyya Distance gives the similarity between $P$ and $Q$, 0 (identical) to $\infty$ (separated) \cite{bhattacharyya1943measure}: $D_B(P, Q) = -\ln \left( \sum{x \in \mathcal{X}} \sqrt{P(x) Q(x)} \right)$

\textbf{CLIP-based Zero Shot Strong OOD Baseline:} To establish baseline performance for detecting anomalous generated images directly from representations, we split the in-distribution reference features $X^r$ extracted from any of the feature extraction model (e.g. CLIP) into training $X^r_{\text{train}}$ and testing $X^r_{\text{test}}$ sets and train all the same non-parametric density estimation models with the same hyperparameter tuning, including OCCSVM, KDE and GMM.  We evaluate each model on a test set consisting of representations from held-out reference images $X^r_{\text{test}}$ and anomalous images $X^g$. This provides an strong initial assessment of how well these models can distinguish between reference and anomalous data based on, for e.g. the CLIP features.

\textbf{Statistical Tests:} To rigorously compare the real and generated feature distributions, we perform several statistical tests. The first is the Two-sample Kolmogorov-Smirnov (KS) Test, which is a non-parametric test that compares the empirical Cumulative Distribution Functions (CDFs) of two samples. The KS statistic measures the maximum distance between the CDFs, with a corresponding p-value for the null hypothesis that the samples are drawn from the same distribution\cite{massey1951kolmogorov}. The Mann-Whitney U Test gives a non-parametric test for whether two independent samples are drawn from the same distribution. It is based on the ranks of the observations in the combined sample\cite{mann1947test}. The Z-test compares the standardized differences between the means of two samples, assuming they are normally distributed. The Z-score measures the distance between the sample means in units of standard error \cite{sprinthall2011basic}. Finally we also calculate the Anomaly Detection with Local Outlier Factor (LOF) and Isolation Forest (IF): These algorithms assign anomaly scores to each point based on their local density (LOF) \cite{breunig2000lof} or ease of isolation (IF) \cite{liu2008isolation} relative to the real data. Higher scores indicate a generated image is anomalous.

\textbf{Generative Evaluation Techniques:} In line with suggestions from \citep{stein2024exposing}, we use the Fréchet distance (FD) and $FD_\infty$ computed with the DINOv2 encoder, instead of the inception network to tell how different the generated images are from the reference set of images. We also use, the CLIP maximum mean discrepancy (CMMD), which is a distance measure between probability distributions that presents benefits over the Fréchet distance. With an appropriate kernel, CMMD does not assume any specific distribution, unlike the Fréchet distance which assumes multivariate normal distributions.

\section{Results}
Tables \ref{tab:supervised-ood-results} \& \ref{tab:unsupervised-ood-results} present the performance comparison between our proposed methods (Forte+SVM, Forte+KDE, and Forte+GMM) and various state of the art supervised and unsupervised techniques for out-of-distribution (OOD) detection techniques. Our methods consistently outperform techniques across all challenging dataset pairings, demonstrating their effectiveness in detecting OOD samples without relying on labeled data. 

\begin{table*}[tbp]
\begin{adjustwidth}{0cm}{0cm}\centering\begin{threeparttable}\centering
\caption{Performance comparison between our method and established supervised SoTA methods on various out-of-distribution (OOD) detection tasks. Our method consistently achieves the best performance across a range of dataset pairings, particularly outperforming on challenging datasets like NINCO and SSB-Hard. It also excels at detecting covariate shifted ID datasets.  Key configurations include ViT-B trained with Cross Entropy and the RMDS postprocessor, and DINOv2+MLS, which utilizes training with Linear Probe DINOv2 alongside the MLS postprocessor. For reliable measurements, Forte is run with 10 random seeds. More comparisons are provided in Fig. \ref{fig:supervised_techniques_comparison}}\label{tab:supervised-ood-results}
\tiny
\begin{tabular}{lrrrrrrrrrrrr}\toprule
& &\multicolumn{2}{c}{\textbf{NNGuide}} &\multicolumn{2}{c}{\textbf{ViM}} &\multicolumn{2}{c}{\textbf{ViT-B+CE+RMDS}} &\multicolumn{2}{c}{\textbf{DINOv2+MLS}} &\multicolumn{2}{c}{\textbf{Forte+GMM(Ours)}} \\\cmidrule{3-12}
\multicolumn{2}{c}{\textbf{In-Distr (ImageNet-1k)}} &\textbf{AUROC} &\textbf{FPR95} &\textbf{AUROC} &\textbf{FPR95} &\textbf{AUROC} &\textbf{FPR95} &\textbf{AUROC} &\textbf{FPR95} &\textbf{AUROC} &\textbf{FPR95} \\\cmidrule{1-12}
&iNaturalist &99.57 &1.83 &99.41 &2.60 &96.10 &19.47 &98.41 &5.64 &\textbf{99.67 $\pm$ 00.03}&\textbf{0.64 $\pm$ 00.06} \\\cmidrule{2-12}
\textbf{} &Texture &95.82 &21.58 &95.34 &20.31 &89.38 &37.22 &91.82 &33.95 &\textbf{98.04 $\pm$ 00.10 }&\textbf{5.61 $\pm$ 00.25} \\\cmidrule{2-12}
\textbf{Far-OOD} &OpenImage-O & & & & &92.32 &29.57 &95.58 &16.88 &\textbf{96.73 $\pm$ 00.11}&\textbf{11.77 $\pm$ 00.57} \\\cmidrule{1-2}\cmidrule{7-12}
&NINCO & & & & &87.31 &46.20 &88.38 &41.02 &\textbf{98.34 $\pm$ 00.09} &\textbf{5.18 $\pm$ 00.51} \\\cmidrule{2-2}\cmidrule{7-12}
\textbf{Near-OOD} &SSB-Hard & & & & &72.87 &84.52 &77.28 &72.90 &\textbf{94.95 $\pm$ 00.17} &\textbf{22.30 $\pm$ 01.45} \\\cmidrule{1-2}\cmidrule{7-12}
&ImageNet-C & & & & & & & & &\textbf{82.25 $\pm$ 00.16} &\textbf{42.56 $\pm$ 02.23} \\\cmidrule{2-2}\cmidrule{11-12}
\textbf{} &ImageNet-R & & & & & & & & &\textbf{93.60 $\pm$ 00.66} &\textbf{20.68 $\pm$ 01.83} \\\cmidrule{2-2}\cmidrule{11-12}
\textbf{Covariate ID} &ImageNet-V2 & & & & & & & & &\textbf{59.28 $\pm$ 00.21} &\textbf{90.40 $\pm$ 00.34} \\\cmidrule{1-2}\cmidrule{11-12}
\bottomrule
\end{tabular}
\end{threeparttable}\end{adjustwidth}
\end{table*}

\begin{table*}[tbp]\centering\begin{threeparttable}\centering
\caption{Comparison of performance figures between our method and various unsupervised baselines for out-of-distribution (OOD) detection tasks, demonstrating superior performance of our method across all challenging dataset pairings. For reliable measurements, Forte is run with 10 random seeds. This is presented as a visualization in Fig.\ref{fig:unsupervised_techniques_comparison}}\label{tab:unsupervised-ood-results}
\scriptsize
\begin{tabular}{lrrrrr}\toprule
\textbf{Method} &\textbf{In-Dist (CIFAR-10)} &\textbf{OOD (CIFAR-100)} &\textbf{OOD (Celeb-A)} &\textbf{OOD (SVHN)} \\\cmidrule{1-5}
\multirow{2}{*}{\textbf{$q(X|\theta_n)$}} &\textbf{AUROC } &52.00 &91.40 &6.50 \\ % \cmidrule{2-5}
&\textbf{FPR95} & 91.67 & 51.10 & 100.0 \\ % \cmidrule{1-2}
\multirow{2}{*}{\textbf{WAIC}} &\textbf{AUROC} &53.20 &92.80 &14.30 \\ % \cmidrule{2-5}
&\textbf{FPR95} & 90.16 & 46.04 & 98.83 \\% \midrule
\multirow{2}{*}{\textbf{TT}} &\textbf{AUROC} &54.80 &84.80 &87.00 \\
&\textbf{FPR95} & 92.06 & 74.00 & 61.28 \\
\multirow{2}{*}{\textbf{LLR}} &\textbf{AUROC} &48.13 & 80.08 & 64.21 \\
&\textbf{FPR95} & 93.44 & 64.79 & 76.36 \\
\multirow{2}{*}{\textbf{DoSE}} &\textbf{AUROC} &56.90 &97.60 &97.30 \\
&\textbf{FPR95} & 91.95 & 12.82 & 13.16 \\
% \multirow{2}{*}{\textbf{DoSE SVM}} &\textbf{AUROC} &57.10 &99.50 &95.50 \\
% &\textbf{FPR95} & & & \\
\multirow{2}{*}{\textbf{Forte+SVM (Ours)}} &\textbf{AUROC} &97.29 $\pm$ 00.55 &\textbf{100.00 $\pm$ 00.00} &\textbf{99.84 $\pm$ 00.05} \\
&\textbf{FPR95} & 09.08 $\pm$ 01.82 &\textbf{0.00 $\pm$ 00.00} &\textbf{00.00 $\pm$ 00.00} \\
\multirow{2}{*}{\textbf{Forte+KDE (Ours)}} &\textbf{AUROC} &94.81 $\pm$ 01.22 &99.75 $\pm$ 00.06 &98.37 $\pm$ 02.07 \\
&\textbf{FPR95} & 18.20 $\pm$ 03.82 &0.06 $\pm$ 00.07 & 07.53 $\pm$ 10.77 \\
\multirow{2}{*}{\textbf{Forte+GMM (Ours)}} &\textbf{AUROC} &\textbf{97.63 $\pm$ 00.15} &\textbf{100.00 $\pm$ 00.00} &\textbf{99.49} $\pm$ \textbf{00.48} \\
&\textbf{FPR95} &\textbf{09.69 $\pm$ 01.08} &\textbf{0.00 $\pm$ 00.00} &\textbf{0.00} $\pm$ \textbf{00.00} \\
\bottomrule
\end{tabular}
\end{threeparttable}\end{table*}

Table \ref{tab:ablation} presents an ablation study investigating the impact of incorporating multiple representation learning techniques into the Forte framework, when detecting arbitrary superclasses from the ImageNet1k hierarchy. In this challenging task example, the in-distribution data consists of various vehicle classes from the ImageNet dataset, such as Ambulance, Beach wagon, Cab, Convertible, Jeep, Limousine, Minivan, Model t, Racer, and Sportscar. The out-of-distribution data is divided into two categories: Near OOD, which includes other utility vehicle classes like fire engine, garbage truck, pickup, tow truck, trailer truck, minivan, moving van, and police van; and Far OOD, which consists of animal classes such as Egyptian cat, Persian cat, Siamese cat, Tabby cat, Tiger cat, Cougar, Lynx, Cheetah, Jaguar, Leopard, Lion, Snow leopard, and Tiger. Our method is successful in very challenging near-OOD situations, and becomes more so when using multiple representations. The combination of all three techniques (CLIP, MSN, and DINOv2) achieves the best overall performance for both Far OOD detection, and comes a close second in Near OOD detection. These findings suggest that leveraging diverse SSL representation techniques within the Forte framework can capture complementary information and enhance the overall OOD detection capabilities.

\begin{table*}[tbp]\centering
\caption{Ablation study investigating the impact of incorporating multiple representation learning techniques for detecting arbitrary superclasses from the ImageNet1k hierarchy, using Forte+GMM.}\label{tab:ablation}
\scriptsize
\begin{tabular}{lrrrrr}\toprule
\textbf{In Distribution (Superclass ImageNet 2012)} &\multicolumn{2}{c}{\textbf{Far OOD}} &\multicolumn{2}{c}{\textbf{Near OOD}} \\\cmidrule{1-5}
&\textbf{AUROC} &\textbf{FPR95} &\textbf{AUROC} &\textbf{FPR95} \\\cmidrule{2-5}
\textbf{Forte (CLIP)} &99.42 & 00.04 & 88.84 & 61.04 \\\cmidrule{1-5}
\textbf{Forte (MSN)} &99.13 &00.39 &82.63 & 77.53 \\\cmidrule{1-5}
\textbf{Forte (DINOv2)} &99.79 & 00.03 &86.97 &79.25 \\\midrule
\textbf{Forte (CLIP+MSN)} &99.96 &00.02 &90.00 & 30.77 \\
\textbf{Forte (CLIP+DINOv2)} &99.98 &00.01 &\textbf{91.35} & \textbf{26.89} \\
\textbf{Forte (DINOv2+MSN)} &99.97 &00.00 &88.71 &31.65 \\
\textbf{Forte (all 3)} &\textbf{100.00} &\textbf{00.01} &91.17 &28.91 \\
\bottomrule
\end{tabular}
\end{table*}

\subsection{Performance on Synthetic Data Detection}

\textbf{To generate synthetic data} for assessing distributional robustness, we first employ the Stable Diffusion Img2Img setting \citep{rombach2022high}, where a diffusion-based model can generate new images conditioned on an input image and a text prompt. We use the Stable Diffusion 2.0 base model and generate images with varying strength parameters (0.3, 0.5, 0.7, 0.9, 1.0) to control the influence of the input image on the generated output, essentially allowing our real distribution to be prior of controllable strength for the generated distribution. We also use the Stable Diffusion 2.0 text-to-image model to generate new images conditioned on the captions generated by BLIP\citep{li2022blip} for each real image from the reference distribution. This allows us to create images that are semantically similar to the real images but with novel compositions and variations. Finally, we also generate images directly from the class name associated with each real image (e.g., "a photo of a \{monarch butterfly\}, in a natural setting"). This provides a baseline for generating images that capture the essential characteristics of the class without relying on specific input images or captions. The image generation pipeline is implemented using the Hugging Face Transformers \citep{wolf2020transformers} and Diffusers \citep{von-platen-etal-2022-diffusers} libraries, which provide high-level APIs for working with pre-trained models. Examples can be found in \autoref{fig:ambulancesynthetic}, \autoref{fig:goldensyntheticpanel} \& \autoref{fig:volleyballsyntheticpanel}.

\textbf{Fréchet Distance (FD), $FD_\infty$, and CMMD scores} provide insights into the distribution shift between real and synthetic images, with generated images moving further away from the real distribution as diffusion strength increases. However, these distribution-level statistics do not provide information about individual images within the distribution, necessitating an OOD detection strategy.
Tables \ref{tab:goldensynthetic} \& \ref{tab:stats-volleyball} compare the performance of Forte+GMM against a strong CLIP-based baseline on the Golden Retriever and Volleyball classes from ImageNet. For the well-represented Golden Retriever class, Forte+GMM consistently outperforms the baseline across all image generation settings, achieving near-perfect AUROC scores and low FPR95 values for Img2Img with strength parameters 0.9 and 1.0, Caption-based, and Class-based image generation. Lower performance at lower strengths is due to images being too similar to the real distribution. The Volleyball class, part of Hard ImageNet \citep{moayerihardimagenet}, focuses on classes with strong spurious cues. Volleyballs rarely occur alone in ImageNet images, and generating images without appropriate priors results in mode collapse (see \autoref{fig:volleyballsyntheticpanel}). Forte+GMM surpasses the CLIP-based baseline in all image generation scenarios, with notable improvements for Img2Img with higher strength parameters, Caption-based, and Class-based image generation (AUROC scores > 97\%, FPR95 values < 6\%).

\begin{table*}\centering
\caption{Performance comparison of Forte+GMM against a CLIP-based baseline, and Gen Eval methods for detecting synthetic images of Golden Retrievers generated using various techniques.}\label{tab:goldensynthetic}
\scriptsize
\begin{tabular}{lrrrrrrrr}\toprule
\textbf{} &\textbf{FD} &\textbf{$FD_\infty$} &\textbf{CMMD Score} &\multicolumn{2}{c}{\textbf{Baseline Model CLIP}} &\multicolumn{2}{c}{\textbf{Forte+GMM}} \\\cmidrule{2-8}
& & & &AUROC &FPR95 &AUROC &FPR95 \\\cmidrule{5-8}
\textbf{Img2Img S=0.3} &453.63 &418.22 &0.52 &61.19 &86.27 &\textbf{68.28 $\pm$ 02.14}&\textbf{68.07 $\pm$ 05.56} \\\cmidrule{1-8}
\textbf{Img2Img S=0.5} &648.97 &624.01 &0.64 &59.49 &86.27 &\textbf{82.93 $\pm$ 02.50} &\textbf{46.80 $\pm$ 10.68} \\\cmidrule{1-8}
\textbf{Img2Img S=0.7} &762.17 &735.42 &0.64 &61.23 &86.36 &\textbf{94.19 $\pm$ 01.85} &\textbf{24.90 $\pm$ 12.99} \\\cmidrule{1-8}
\textbf{Img2Img S=0.9} &845.96 &819.06 &0.66 &59.05 &88.87 &\textbf{97.59 $\pm$ 01.23} &\textbf{14.41 $\pm$ 13.55} \\\cmidrule{1-8}
\textbf{Img2Img S=1.0} &891.39 &870.17 &0.73 &60.15 &89.23 &\textbf{98.11 $\pm$ 00.75} &\textbf{06.09 $\pm$ 05.72} \\\cmidrule{1-8}
\textbf{Caption-based} &575.18 &546.42 &0.90 &80.71 &71.54 &\textbf{96.77 $\pm$ 01.14} &\textbf{18.90 $\pm$ 14.38} \\\midrule
\textbf{Class-based} &1,065.96 &1,048.18 &1.07 &75.73 &82.23 &\textbf{98.26 $\pm$ 01.12} &\textbf{10.22 $\pm$ 13.04} \\
\bottomrule
\end{tabular}
\end{table*}

Tables \ref{tab:stats-golden} \& \ref{tab:stats-volleyball} compare various statistical measures for assessing the similarity between distributions of real and generated images across different image generation methods. The results are shown for the Golden Retriever and Volleyball classes. For both classes, the statistical tests fail to provide clear and consistent indications of distributional differences between real and generated images. Z-scores are close to zero, suggesting no significant difference in means, while K-S test and Mann-Whitney U test p-values are inconsistent across generation methods. LOF and IF scores show limited variation and struggle to identify anomalies in the generated images. JSD, KLD, Wasserstein distance, and Mahalanobis distance do not exhibit clear trends as generation strength increases, making it difficult to draw meaningful conclusions about distributional similarity. To add to this problem, JSD and KLD are not defined for any distribution of representations apart from MSN embeddings. Bhattacharya distances are not defined on any representation, indicating no overlap.
These results highlight the limitations of relying solely on statistical tests comparing representations of new points to the original distribution, as they often fail to capture subtle distributional differences between real and generated images, particularly when generated images are highly realistic. The inconsistencies and lack of clear trends underscore the need for more sophisticated approaches, such as the proposed Forte+GMM framework, which leverages diverse feature extraction techniques and anomaly detection algorithms to effectively detect synthetic images.

\subsection{Performance on Medical Image Datasets}\label{sec:mri_results}

Magnetic resonance imaging (MRI) datasets and models present a unique challenge in a high stakes scenario, making robust out-of-distribution (OOD) detection paramount. These datasets, typically acquired under specific study protocols, suffer from batch effects that hinder model generalization even when changes in protocols are minute \citep{horng2023batch}. The problem is exacerbated by dataset homogeneity within studies and limited dataset sizes in clinical applications, making it impractical to train separate models for each batch. Subtle distribution shifts between datasets of the same subject matter, though not immediately apparent to human observers, can significantly impact model performance. While data augmentation and harmonization methods have been explored \citep{hu2023image}, existing metrics for detecting distribution drift and assessing harmonization effectiveness are limited. Our work proposes deploying Forte as a more robust metric to address this gap by effectively differentiating between datasets that may appear similar but have crucial distributional differences. Using public datasets like FastMRI \cite{zbontar2018fastmri, knoll2020fastmri} and the Osteoarthritis Initiative (OAI) \cite{nevitt2006osteoarthritis}, we simulate realistic scenarios where models trained on one dataset (treated as in-distribution) are confronted with another (considered OOD). Table \ref{tab:mri-results} demonstrates the effectiveness of Forte in differentiating the datasets. In other tests over closed-source but real-world hospital data, we observe similar near perfect performance, suggesting that Forte can have zero-shot applications in a variety of other sensitive domains.

\begin{table*}[h]\centering
\begin{threeparttable}
\caption{Out-of-distribution (OOD) detection using Forte, applied to medical image datasets. Strong performance by Forte suggests the presence of batch effects and a need for data harmonization. Refer to Appendix \ref{sec:mri_dataset} for more details on the FastMRI and OAI datasets. For reliable estimation, performance is measured over 10 random seeds.}
\label{tab:mri-results}
\scriptsize
\begin{tabular}{llcccc}
\toprule
\multirow{2}{*}{\textbf{Method}} & \multirow{2}{*}{\textbf{Metric}} & \textbf{In-Dist:} & \textbf{FastMRI NoFS} & \textbf{FastMRI FS} & \textbf{FastMRI FS}\\
& & \textbf{OOD:} & \textbf{OAI TSE} & \textbf{OAI T1} & \textbf{OAI MPR} \\
\midrule
\multirow{2}{*}{\textbf{Forte+SVM (Ours)}} & \textbf{AUROC} & & \textbf{100.00} $\pm$ \textbf{0.00} & \textbf{100.00} $\pm$ \textbf{0.00} & \textbf{100.00} $\pm$ \textbf{0.00} \\
&\textbf{FPR95} & &\textbf{00.00} $\pm$ \textbf{00.00} & \textbf{00.00} $\pm$ \textbf{00.00} & \textbf{00.00} $\pm$ \textbf{00.00} \\
\multirow{2}{*}{\textbf{Forte+KDE (Ours)}} & \textbf{AUROC} & & 97.97 $\pm$ 5.63 & 95.98 $\pm$ 7.84 & 95.99 $\pm$ 7.86 \\
&\textbf{FPR95} & & 9.49 $\pm$ 26.76 & 19.75 $\pm$ 39.10 & 19.77 $\pm$ 39.31 \\
\multirow{2}{*}{\textbf{Forte+GMM (Ours)}} & \textbf{AUROC} & & 99.95 $\pm$ 0.13 & 99.91 $\pm$ 0.16 & 99.91 $\pm$ 0.17 \\
&\textbf{FPR95} & & 0.00 $\pm$ 0.00 & 0.00 $\pm$ 0.00 & 0.00 $\pm$ 0.00 \\
\bottomrule
\end{tabular}
\end{threeparttable}
\end{table*}

% ***TODO: Qualitative review of results

\section{Discussion}

While just using precision and density as summary statistics would lead Forte to be considered single-sample tests, recall and coverage make our framework a two-sample test, as they take into account the relationship between each synthetic image and the entire set of real images. This also allows us to better understand the behavior of the generated images in relation to the real data distribution. This is usually not a problem in production, and benchmarks, since when you check whether a sample is out of distribution, you usually sample from the mixture probability distribution of in and out of distribution, which is what we really use here. Prior work has also operated in this paradigm, i.e. two sample OOD test such as the typicality test \citep{nalisnick2019detecting}.

When selecting non-parametric density estimators to model typicality, it is important to consider the manifold properties. We observe that GMM excels with clustered data, especially with a bigger number of gaussians when operating with data from multiple possible classes, while KDE struggles with sharp density variations. OCSVM is robust to outliers and performs well in high-dimensional spaces, making it suitable for cohesive normal data.

DoSE \citep{morningstar2021density} pioneered chaining multiple summary statistics for typicality measurement, using ID sample distributions to construct typicality estimators rather than direct statistic values. While groundbreaking, DoSE's reliance on generative model likelihoods proved problematic, as subsequent work \citep{caterini2022entropic, zhang2021out} showed these can be unreliable for OOD detection. Our approach addresses these limitations through four key improvements: (1) utilizing self-supervised representations to capture semantic features, (2) incorporating manifold estimation to account for local topology, (3) unifying typicality scoring and downstream prediction models to minimize deployment overhead, and (4) eliminating additional model training requirements. These advances yield substantial empirical gains. While building upon DoSE's fundamental statistical machinery, our modifications dramatically enhance practical performance.

\section{Conclusion}
Our work underscores the importance of developing robust methods for detecting out-of-distribution (OOD) data and synthetic images generated by foundation models. The proposed Forte framework combines diverse representation learning techniques, non-parametric density estimators, and novel per-point summary statistics, demonstrating far superior performance compared to state-of-the-art baselines across various OOD detection tasks and synthetic image detection tasks. The experimental results not only showcase the effectiveness of Forte but also reveal the limitations of relying solely on statistical tests and distribution-level metrics for assessing the similarity between real and synthetic data. We hope that as generative models continue to advance, strong test frameworks like Forte will play a crucial role in maintaining the reliability of ML systems by detecting deviating data, unsafe distributions, distribution shifts, and anomalous samples, ultimately contributing to the development of more robust and trustworthy AI applications in the era of foundation models.

% \input{./chapters/conclusion}

% \setcitestyle{numbers} % set the citation style to ``numbers''.
\bibliographystyle{iclr2024_conference} % We choose the "plain" reference style
\bibliography{./ref.bib} % Entries are in the refs.bib fil

% \bibliography{iclr2024_conference}
% \bibliographystyle{iclr2024_conference}

\newpage

\newpage
\appendix 
\section{Appendix: Synthetic Data Samples}

\begin{figure}[!htbp]
    \centering
    \includegraphics[width=0.8\linewidth]{./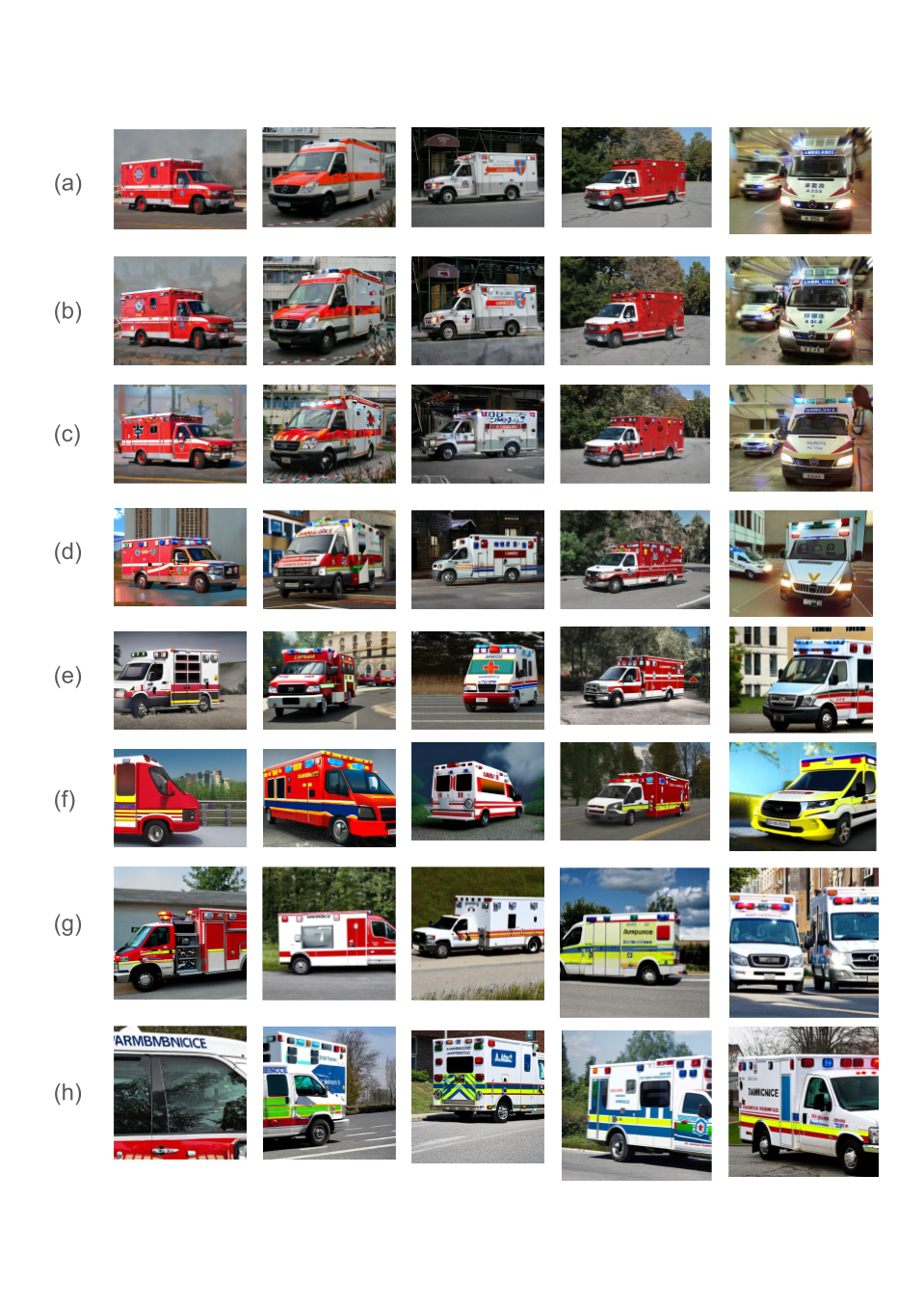}
    \caption{Synthetic images generated for the class "ambulance" using various techniques. Row (a) shows real images, while rows (b) to (f) display Img2Img generated images with strength parameters 0.3, 0.5, 0.7, 0.9, and 1.0, respectively. Row (g) presents images generated using caption descriptions of the real images in row (a), and row (h) shows images generated solely based on the classname.}
    \label{fig:ambulancesynthetic}
\end{figure}

\begin{figure}
        \centering
        \includegraphics[width=0.9\linewidth]{./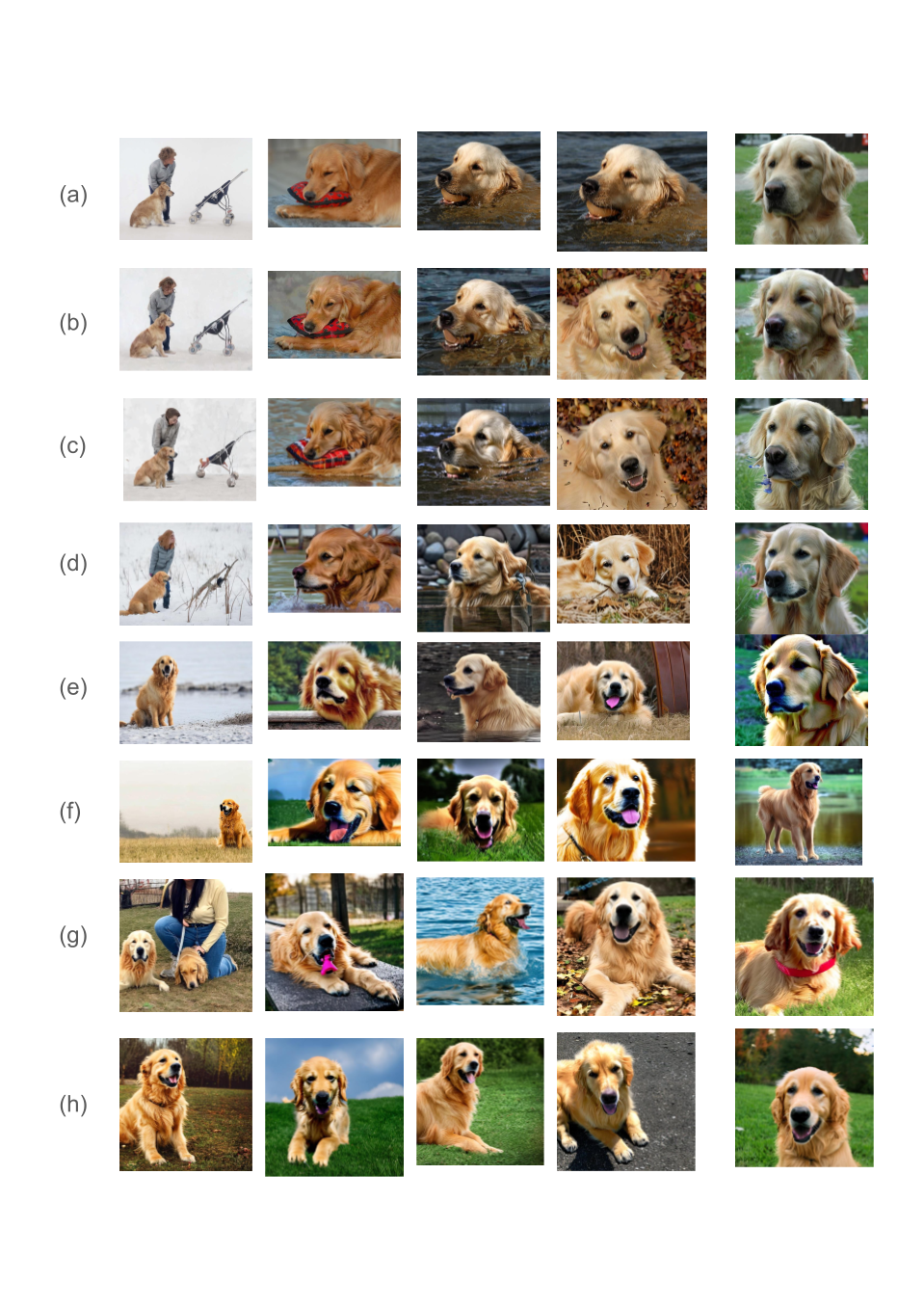}
        \caption{Synthetic images generated for the class "golden retriever" using different techniques. Row (a) shows real images, while rows (b) to (f) display Img2Img generated images with strength parameters 0.3, 0.5, 0.7, 0.9, and 1.0, respectively. Row (g) presents images generated using caption descriptions of the real images in row (a), and row (h) shows images generated solely based on the classname.}
        \label{fig:goldensyntheticpanel}
    \end{figure}

\begin{figure}
        \centering
        \includegraphics[width=0.9\linewidth]{./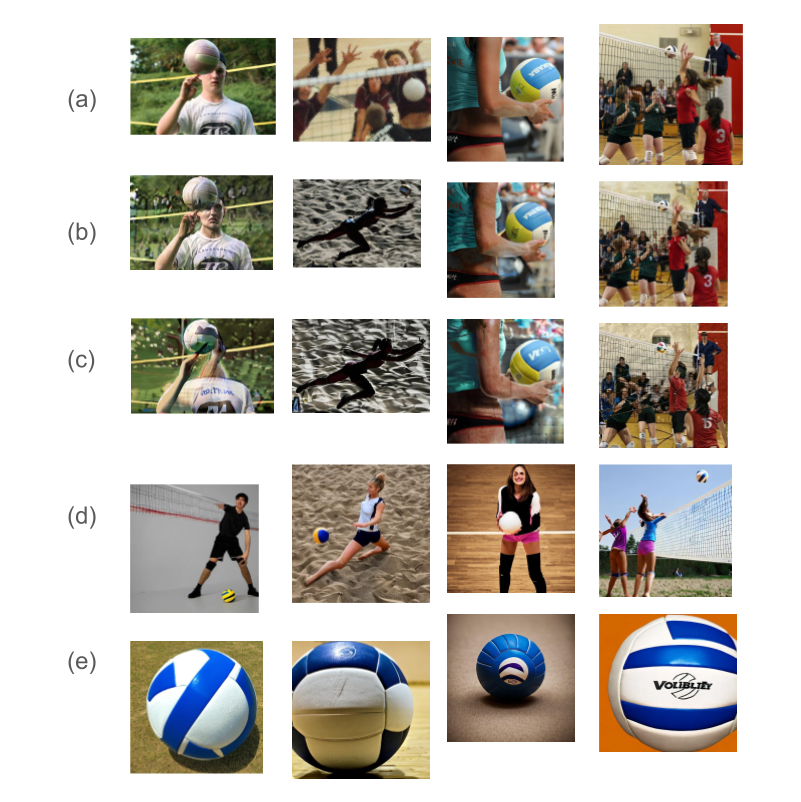}
        \caption{Synthetic images generated for the class "volleyball" using various techniques. Row (a) shows real images, while rows (b) and (c) display Img2Img generated images with strength parameters 0.3, 0.5 respectively. Row (d) presents images generated using caption descriptions of the real images in row (a), and row (e) shows images generated solely based on the classname. Row (e) exhibits mode collapse properties.}
        \label{fig:volleyballsyntheticpanel}
    \end{figure}
\newpage

\begin{figure}
    \centering
    \includegraphics[width=1\linewidth]{./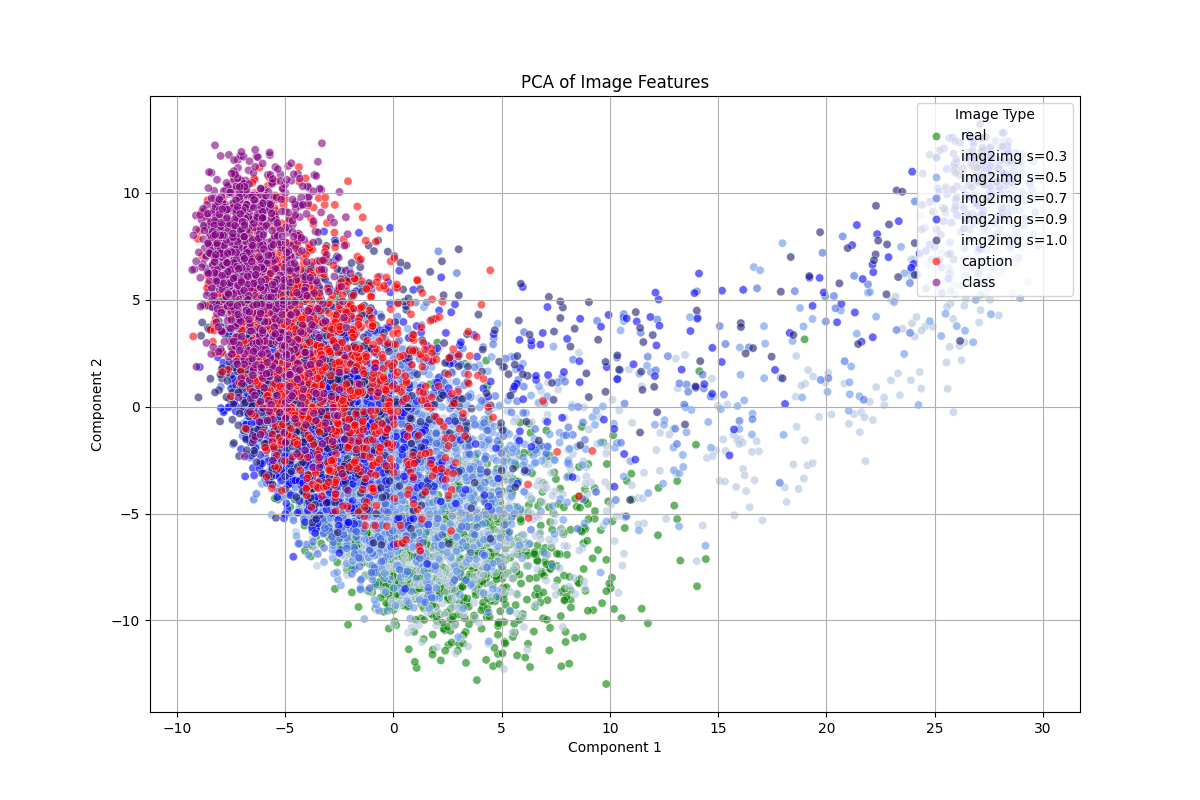}
    \caption{PCA of Golden Retriever Images in CLIP space. We can clearly observe that the distribution of real images is most diverse, whereas the "classname" based generated images are least diverse. We also see a progression of images tending towards the modes of the distribution with increasing strengths allotted to the diffusion process.}
    \label{fig:pca-golden-retriever}
\end{figure}

\begin{figure*}[!htbp]
\centering
\begin{subfigure}[b]{0.48\textwidth}
\centering
\includegraphics[width=\linewidth]{./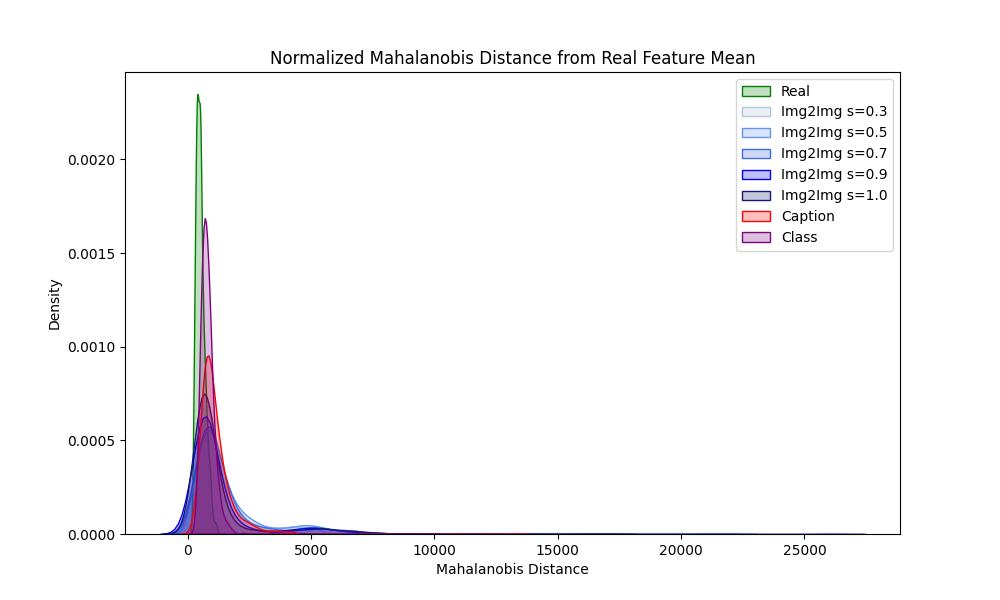}
\caption{Normalized Mahalanobis Distances for generated distributions vs. real distributions, for the class "golden retriever".}
\label{fig:mahalanobis}
\end{subfigure}
\hfill
\begin{subfigure}[b]{1\textwidth}
\centering
\includegraphics[width=\linewidth]{./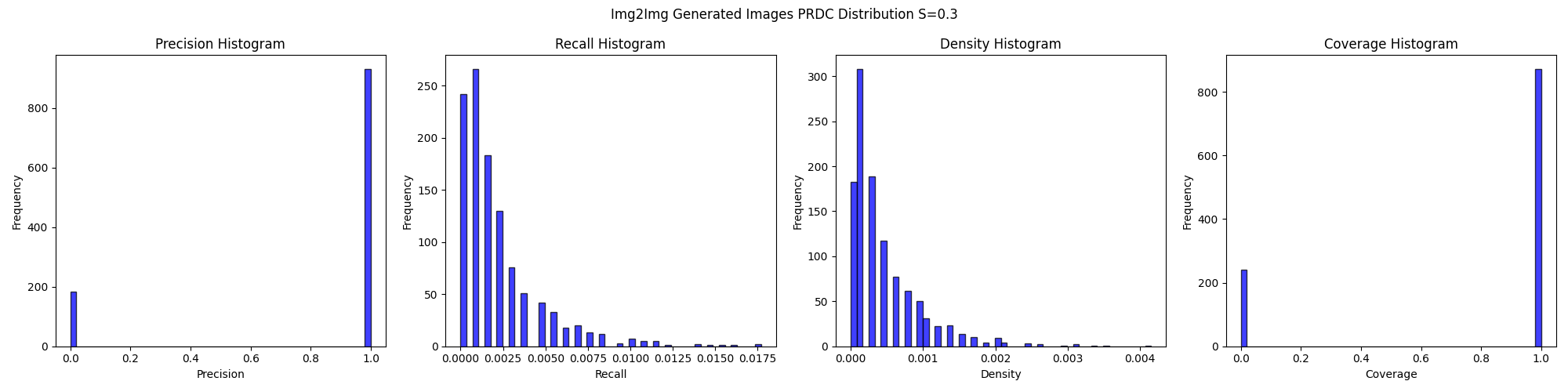}
\caption{PRDC Value distributions for golden retrievers, with generated images using Stable Diffusion 2 Base with strength = 0.3.}
\label{fig:prdc}
\end{subfigure}
\caption{Comparison of Mahalanobis Distances and PRDC Value distributions for real and generated images of golden retrievers.}
\label{fig:mahalanobis_prdc}
\end{figure*}
\begin{figure*}[!htbp]
\centering
\begin{subfigure}[b]{0.48\textwidth}
\centering
\includegraphics[width=\linewidth]{./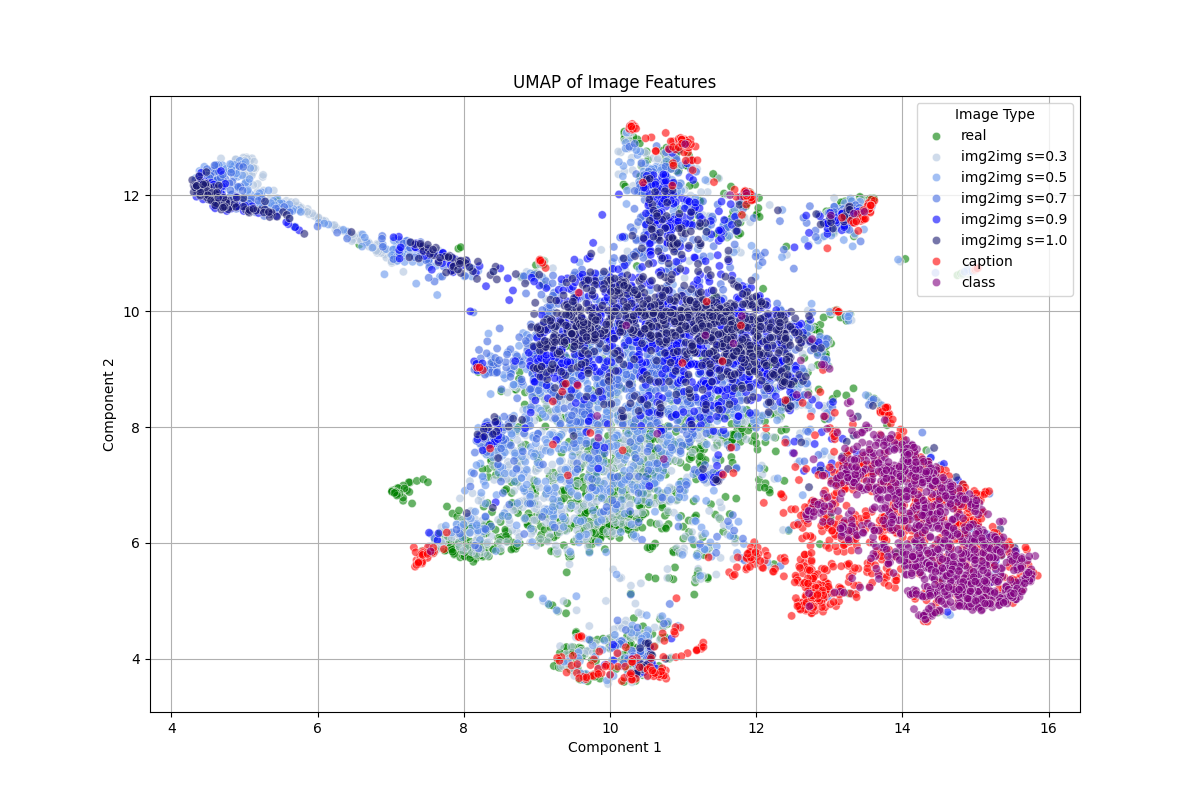}
\caption{UMAP of Golden Retriever Images in CLIP space.}
\label{fig:umap-golden-retriever}
\end{subfigure}
\hfill
\begin{subfigure}[b]{0.48\textwidth}
\centering
\includegraphics[width=\linewidth]{./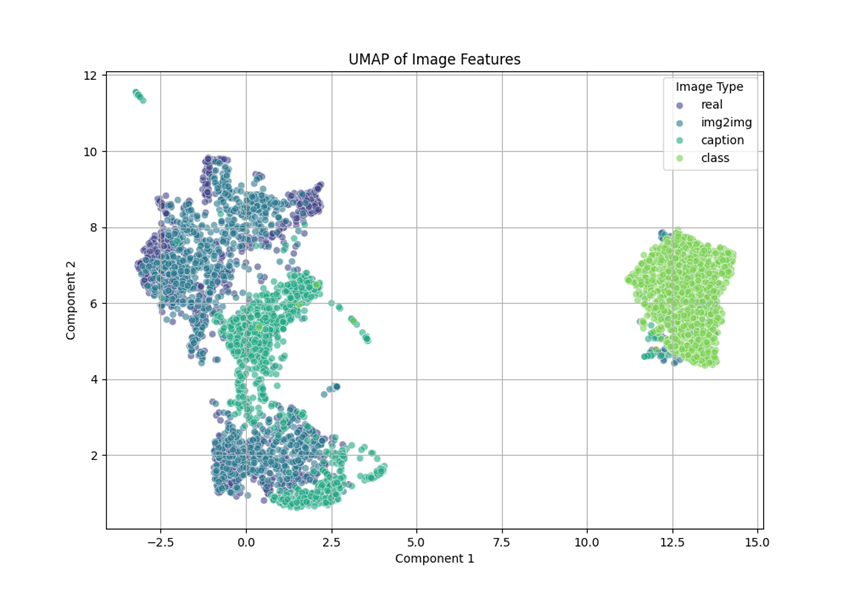}
\caption{UMAP of Volleyball Images. Mode collapse is clearly observable for classname generated images.}
\label{fig:volleyball}
\end{subfigure}
\caption{UMAP visualizations of Golden Retriever and Volleyball images in CLIP space, highlighting mode collapse in classname generated volleyball images.}
\label{fig:umap_golden_volleyball}
\end{figure*}

\section{Appendix: Baselines Statistical Tests}
\begin{adjustwidth}{-3 cm}{-3 cm}\centering\begin{threeparttable}[!htb]\centering
\caption{Comparison of statistical measures for assessing distribution similarity across different
image generation methods. The table presents results for various statistical tests and metrics, including Z-score, Kolmogorov-Smirnov (K-S) test, Mann-Whitney U test, Local Outlier Factor (LOF), Isolation Forest (IF), and Wasserstein distance. The image generation methods evaluated include Img2Img with varying strength parameters (S=0.3, 0.5, 0.7, 0.9, 1.0), caption-based generation, and class-based generation for Golden Retrievers. KL Divergence and Jensen Shannon Divergence were calculated on MSN embeddings, since CLIP embeddings had no overlap.}\label{tab:stats-golden}
\tiny
\begin{tabular}{lrrrrrrrrrrrr}\toprule
\textbf{} &\textbf{Z-Score} &\multicolumn{2}{c}{\textbf{K-S Test }} &\multicolumn{2}{c}{\textbf{M-W U-Test}} &\textbf{LOF } &\textbf{IF } &\textbf{JSD (MSN)} &\textbf{KLD (MSN)} &\textbf{Wass} &\textbf{Mahalanobis} \\\cmidrule{2-12}
& &\textbf{stat} &\textbf{p-val} &\textbf{stat} &\textbf{p-val} & & & & & & \\\cmidrule{3-6}
\textbf{Img2Img S=0.3} &-1.17e-9 &4.17e-3 &3.30e-4 &1.27e+11 &6.64e-1 &31.11\% &6.11\% &2.75e-3 &3.47e-3 &5.52e-3 &4.38e+1 \\\cmidrule{1-12}
\textbf{Img2Img S=0.5} &4.82e-9 &4.72e-3 &2.91e-5 &2.76e+10 &2.15e-3 &27.82\% &5.97\% &-1.81e-3 &-2.08e-3 &6.49e-3 &4.78e+1 \\\cmidrule{1-12}
\textbf{Img2Img S=0.7} &3.89e-9 &4.20e-3 &2.93e-4 &1.28e+11 &2.65e-2 &28.67\% &6.32\% &2.23e-3 &2.36e-3 &6.16e-3 &4.61e+1 \\\cmidrule{1-12}
\textbf{Img2Img S=0.9} &-4.16e-9 &4.43e-3 &1.09e-4 &1.27e+11 &9.20e-2 &32.61\% &6.92\% &7.22e-3 &1.02e-2 &6.57e-3 &4.22e+1 \\\cmidrule{1-12}
\textbf{Img2Img S=1.0} &7.93e-9 &4.58e-3 &5.49e-5 &1.27e+11 &3.44e-1 &34.09\% &7.10\% &NaN &NaN &6.89e-3 &4.02e+1 \\\cmidrule{1-12}
\textbf{Caption-based} &5.56e-9 &1.02e-2 &7.56e-23 &1.29e+11 &1.85e-29 &31.83\% &6.38\% &NaN &NaN &1.39e-2 &4.08e+1 \\\cmidrule{1-12}
\textbf{Class-based} &-2.98e-9 &1.17e-2 &5.19e-30 &1.29e+11 &3.26e-40 &40.63\% &7.73\% &NaN &NaN &1.69e-2 &3.43e+1 \\\cmidrule{1-12}
\bottomrule
\end{tabular}
\end{threeparttable}\end{adjustwidth}

\begin{adjustwidth}{-3 cm}{-3 cm}\centering\begin{threeparttable}[!htb]\centering
\caption{Comparison of statistical measures for assessing distribution similarity across different
image generation methods using CLIP embeddings. The table presents results for various statistical tests and metrics, includ-
ing Z-score, Kolmogorov-Smirnov (K-S) test, Mann-Whitney U test, Local Outlier Factor (LOF),
Isolation Forest (IF), and Wasserstein distance. The image generation methods evaluated include
Img2Img with varying strength parameters (S=0.3, 0.5, 0.7, 0.9, 1.0), caption-based generation, and
class-based generation for the Volleyball class. KL Divergence and Jensen Shannon Divergence were calculated on MSN embeddings, since CLIP embeddings had no overlap.}\label{tab:stats-volleyball}
\tiny
\begin{tabular}{lrrrrrrrrrrrr}\toprule
\textbf{} &\textbf{Z-Score} &\multicolumn{2}{c}{\textbf{K-S Test }} &\multicolumn{2}{c}{\textbf{M-W U-Test}} &\textbf{LOF } &\textbf{IF } &\textbf{JSD} &\textbf{KLD} &\textbf{Wass} &\textbf{Mahalanobis} \\\cmidrule{2-12}
& &\textbf{stat} &\textbf{p-val} &\textbf{stat} &\textbf{p-val} & & & & & & \\\cmidrule{3-6}
\textbf{Img2Img S=0.3} &2.25e-9 &4.14e-3 &3.47e-4 &1.29e+11 &1.41e-2 &27.89\% &4.54\% &NaN &NaN &4.57e-3 &4.90e+1 \\\cmidrule{1-12}
\textbf{Img2Img S=0.5} &-2.15e-9 &3.62e-3 &2.65e-3 &1.30e+11 &1.27e-1 &22.98\% &4.03\% &-9.28e-3 &-7.38e-3 &5.75e-3 &5.70e+1 \\\cmidrule{1-12}
\textbf{Img2Img S=0.7} &6.11e-9 &7.99e-3 &2.16e-14 &1.30e+11 &1.03e-8 &20.73\% &3.67\% &-9.03e-3 &-5.14e-3 &3.30e-2 &1.68e+2 \\\cmidrule{1-12}
\textbf{Img2Img S=0.9} &1.58e-8 &1.13e-2 &1.41e-28 &1.31e+11 &1.09e-16 &19.06\% &2.59\% &NaN &NaN &1.12e-2 &6.66e+1 \\\cmidrule{1-12}
\textbf{Img2Img S=1.0} &2.06e-9 &1.22e-2 &6.56e-33 &1.31e+11 &4.99e-17 &19.42\% &2.26\% &NaN &NaN &1.23e-2 &6.69e+1 \\\cmidrule{1-12}
\textbf{Caption-based} &6.77e-9 &6.91e-3 &7.12e-11 &1.30e+11 &3.46e-10 &30.89\% &5.27\% &NaN &NaN &7.71e-3 &4.51e+1 \\\cmidrule{1-12}
\textbf{Class-based} &-5.63e-9 &2.65e-2 &2.82e-154 &1.32e+11 &2.98e-85 &36.54\% &2.64\% &NaN &NaN &2.18e-2 &6.43e+1 \\\cmidrule{1-12}
\bottomrule
\end{tabular}
\end{threeparttable}\end{adjustwidth}

\section{Appendix : Theoretical Properties of per-point PRDC}
\label{theoretical-proof}
\begin{theorem}
Under the following assumptions:

\begin{enumerate}
    \item Feature Space: $\mathbb{X} = \mathbb{R}^D$, where $D$ is the dimensionality.
    \item In-Distribution (ID) Data:
    \begin{itemize}
    \item Training Set: $X_{\text{ID}}^{\text{train}} = \{\mathbf{x}_1, \dots, \mathbf{x}_{N_{\text{train}}}\}$ with $\mathbf{x}_i \sim \mathcal{N}(\boldsymbol{\mu}_{\text{ID}}, \sigma^2 I)$.
    \item Test Set: $X_{\text{ID}}^{\text{test}} = \{\mathbf{x}_1', \dots, \mathbf{x}_{N_{\text{test}}}'\}$ with $\mathbf{x}_i' \sim \mathcal{N}(\boldsymbol{\mu}_{\text{ID}}, \sigma^2 I)$.
    \end{itemize}

    \item Out-of-Distribution (OOD) Data:

    \begin{itemize}
        \item Test Set: $X_{\text{OOD}} = \{\mathbf{x}_1'', \dots, \mathbf{x}_{N_{\text{OOD}}}''\}$ with $\mathbf{x}_i'' \sim \mathcal{N}(\boldsymbol{\mu}_{\text{OOD}}, \sigma^2 I)$, where $\Delta = \|\boldsymbol{\mu}_{\text{ID}} - \boldsymbol{\mu}_{\text{OOD}}\| \gg 0$.
    \end{itemize}

    \item Distance Function: Euclidean distance $d(\mathbf{x}, \mathbf{y}) = \|\mathbf{x} - \mathbf{y}\|_2$.
    \item $k$-Nearest Neighbors: For a point $\mathbf{x}$, $\text{NN}_k(\mathbf{x})$ denotes its $k$-nearest neighbors in $X_{\text{ID}}^{\text{train}}$.
\end{enumerate}

Using previous definitions of Per-Point PRDC Metrics:

\begin{enumerate}
    \item Precision per point ($\text{P}(\mathbf{x}')$):
   $$
   \text{P}(\mathbf{x}') = \mathbb{I}\left[\min_{\mathbf{x} \in X_{\text{ID}}^{\text{train}}} d(\mathbf{x}', \mathbf{x}) \leq r_k(\mathbf{x})\right],
   $$

   where $r_k(\mathbf{x})$ is the distance from $\mathbf{x}$ to its $k$-th nearest neighbor in $X_{\text{ID}}^{\text{train}}$.
   \item Recall per point ($\text{R}(\mathbf{x}')$):
   $$
   \text{R}(\mathbf{x}') = \frac{1}{N_{\text{train}}} \sum_{\mathbf{x} \in X_{\text{ID}}^{\text{train}}} \mathbb{I}\left[d(\mathbf{x}', \mathbf{x}) \leq r_k(\mathbf{x}')\right].
   $$

   \item Density per point ($\text{D}(\mathbf{x}')$):
   $$
   \text{D}(\mathbf{x}') = \frac{1}{k N_{\text{train}}} \sum_{\mathbf{x} \in X_{\text{ID}}^{\text{train}}} \mathbb{I}\left[d(\mathbf{x}', \mathbf{x}) \leq r_k(\mathbf{x})\right].
   $$

   \item Coverage per point ($\text{C}(\mathbf{x}')$):
   $$
   \text{C}(\mathbf{x}') = \mathbb{I}\left[\min_{\mathbf{x} \in X_{\text{ID}}^{\text{train}}} d(\mathbf{x}', \mathbf{x}) \leq r_k(\mathbf{x}')\right],
   $$

   where $r_k(\mathbf{x}')$ is the distance from $\mathbf{x}'$ to its $k$-th nearest neighbor in $X_{\text{ID}}^{\text{train}}$.
\end{enumerate}

Then, the expected values and variances of these per-point PRDC metrics are:

\begin{itemize}
    \item For ID Test Samples ($\mathbf{x}' \in X_{\text{ID}}^{\text{test}}$):
    \begin{enumerate}
        \item Expected Precision per point:
            $$
            \mathbb{E}[\text{P}(\mathbf{x}')] \approx 1 - e^{-k}.
            $$
        \item Expected Recall per point:
            $$
            \mathbb{E}[\text{R}(\mathbf{x}')] \approx \frac{k}{N_{\text{train}}}.
            $$
        \item Expected Density per point:
            $$
            \mathbb{E}[\text{D}(\mathbf{x}')] \approx 1.
            $$
        \item Expected Coverage per point:
            $$
            \mathbb{E}[\text{C}(\mathbf{x}')] \approx 1 - e^{-k}.
            $$
    \end{enumerate}

    \item For OOD Samples ($\mathbf{x}'' \in X_{\text{OOD}}$):
    \begin{enumerate}
        \item Expected Precision, Recall, Density, and Coverage per point are all approximately zero
    \end{enumerate}

    \item Variances:
    \begin{enumerate}
        \item Precision and Coverage per point:
            $$
            \operatorname{Var}[\text{P}(\mathbf{x}')] = \mathbb{E}[\text{P}(\mathbf{x}')] \left(1 - \mathbb{E}[\text{P}(\mathbf{x}')]\right).
            $$
        \item Recall per point:
            $$
            \operatorname{Var}[\text{R}(\mathbf{x}')] \approx \frac{\mathbb{E}[\text{R}(\mathbf{x}')] \left(1 - \mathbb{E}[\text{R}(\mathbf{x}')]\right)}{N_{\text{train}}}.
            $$
        \item Density per point
            $$
            \operatorname{Var}[\text{D}(\mathbf{x}')] \approx \frac{\mathbb{E}[\text{D}(\mathbf{x}')] \left(1 - \mathbb{E}[\text{D}(\mathbf{x}')]\right)}{k}.
            $$
    \end{enumerate}
\end{itemize}
\end{theorem}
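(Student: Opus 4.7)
The plan is to split the analysis into the ID and OOD regimes and rely throughout on isotropic--Gaussian concentration combined with a Poisson-type approximation for sums of weakly dependent Bernoullis. For an ID test point $\mathbf{x}'$, the single computation driving every mean claim is the per-training-point coincidence probability $p_i := P(\mathbf{x}' \in B(\mathbf{x}_i, r_k(\mathbf{x}_i)))$. A volume--density matching argument shows that $r_k(\mathbf{x}_i)$ is by construction the radius for which the ball carries mass approximately $k/N_{\text{train}}$ under the common ID density, so $p_i \approx k/N_{\text{train}}$. Precision and coverage are then indicators that a union of $N_{\text{train}}$ such events occurs; the count of occurring events is approximately Poisson with mean $N_{\text{train}}\cdot p_i = k$, giving $\mathbb{E}[\text{P}] = \mathbb{E}[\text{C}] \approx 1 - e^{-k}$. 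Recall evaluates to $k/N_{\text{train}}$ essentially by the definition of $r_k(\mathbf{x}')$, since exactly $k$ training points fall within the $k$-th NN radius. For density, the sum of $N_{\text{train}}$ indicators has expectation $\approx k$, which combined with the stated $1/(k N_{\text{train}})$ normalization produces the claimed leading-order value.

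For an OOD test point $\mathbf{x}'' \sim \mathcal{N}(\boldsymbol{\mu}_{\text{OOD}}, \sigma^2 I)$, I would invoke standard Gaussian concentration to show that $\|\mathbf{x}'' - \mathbf{x}_i\|$ concentrates near $\sqrt{\Delta^2 + 2\sigma^2 D}$, while $r_k(\mathbf{x}_i)$ remains of order $\sigma\sqrt{D}$ times a slowly growing factor arising from chi-squared order-statistic tails. Under the assumption $\Delta \gg \sigma\sqrt{D}$, every indicator appearing in precision, density, and coverage evaluates to zero with high probability, so their expectations are exponentially small in $\Delta^2/\sigma^2$. For recall, interpreting $r_k(\mathbf{x}'')$ as the $k$-th NN radius within the OOD test set (so that OOD-centered balls have diameter $\Theta(\sigma\sqrt{D})$), the same separation argument shows these balls exclude the ID training set, yielding $\text{R}(\mathbf{x}'') \approx 0$.

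The variance claims then reduce to routine second-moment computations given the means above. Precision and coverage are Bernoulli indicators, so their variance is $\mu(1-\mu)$ directly. Recall is a normalized binomial with parameters $(N_{\text{train}}, k/N_{\text{train}})$ to leading order, yielding variance $\mu(1-\mu)/N_{\text{train}}$. For density, the scaled-Bernoulli sum inherits a variance scaling with $1/k$ rather than $1/N_{\text{train}}$ because of the $1/(kN_{\text{train}})$ normalization, matching the stated expression.

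The hard part will be making the Poisson/independence approximation at the heart of the precision and coverage calculations rigorous: the events $\{\mathbf{x}' \in B(\mathbf{x}_i, r_k(\mathbf{x}_i))\}$ are dependent because they share the random training set and the radii $r_k(\mathbf{x}_i)$ themselves overlap through common $k$-NN neighbor sets. I would discharge this either by the Stein--Chen method to bound the total-variation distance to Poisson, or via a second-moment argument exploiting the approximate independence of high-dimensional Gaussian nearest-neighbor structure when $D$ is moderately large. The OOD side is technically easier but still requires quantitative chi-squared tail bounds to guarantee that $\Delta$ uniformly dominates all random $r_k$ radii with the required probability, so the error terms in the near-zero expectations can be controlled explicitly.
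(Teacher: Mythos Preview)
Your approach is essentially the same as the paper's at the structural level: both arguments pivot on establishing $p_i := P\bigl(\mathbf{x}' \in B(\mathbf{x}_i, r_k(\mathbf{x}_i))\bigr) \approx k/N_{\text{train}}$, then use an independence step for precision, exact counting for recall, linearity of expectation for density, and Bernoulli/binomial second moments for the variances. The main technical difference is in how $p_i$ is obtained: the paper runs an explicit chain (chi-squared law for squared distances $\to$ normal approximation for large $D$ $\to$ probability-integral transform to a uniform $\to$ Beta order-statistic expectation $= k/(N_{\text{train}}+1)$), whereas your volume--density matching reaches the same conclusion more directly and without invoking large-$D$ asymptotics. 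The paper's precision step is the complement calculation $1 - (1 - k/N_{\text{train}})^{N_{\text{train}}} \to 1 - e^{-k}$ rather than an explicit Poisson law, but this is equivalent to your framing.

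Two discrepancies are worth knowing. First, the paper's own proof of coverage actually concludes $\mathbb{E}[\text{C}(\mathbf{x}')] = 1$ deterministically (since the minimum distance to the training set is trivially at most the $k$-th smallest such distance), contradicting the theorem's stated $1 - e^{-k}$, which you reproduce by treating coverage identically to precision. Second, the paper's density computation silently switches to a $1/k$ normalization instead of the $1/(kN_{\text{train}})$ in the theorem statement; with the stated normalization and an expected count of $k$, one gets $1/N_{\text{train}}$ rather than $1$, so your sentence about density inherits this inconsistency. Finally, the paper makes no attempt at Stein--Chen or any rigorous dependence control; it simply records the independence assumption as an approximation in a closing remark, so your proposal is already more careful on that front than the target.
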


\begin{proof}
    
Let us first clarify some assumptions and notation, specific to this proof for clarity.
\begin{enumerate}
    \item Feature Space: $\mathbb{X} = \mathbb{R}^D$.
    \item ID Data Distribution: $\mathcal{N}(\boldsymbol{\mu}_{\text{ID}}, \sigma^2 I)$.
    \item OOD Data Distribution: $\mathcal{N}(\boldsymbol{\mu}_{\text{OOD}}, \sigma^2 I)$, with 
    the mean difference $\Delta = \|\boldsymbol{\mu}_{\text{ID}} - \boldsymbol{\mu}_{\text{OOD}}\| \gg 0$.
    \item Sample Sizes:
    \begin{enumerate}
        \item $N_{\text{train}}$: Number of ID training samples.
        \item $N_{\text{test}}$: Number of ID test samples.
        \item $N_{\text{OOD}}$: Number of OOD samples.
    \end{enumerate}
    \item Distance Function: $d(\mathbf{x}, \mathbf{y}) = \|\mathbf{x} - \mathbf{y}\|_2$.
    \item $k$-Nearest Neighbor Distances:
    \begin{enumerate}
        \item $r_k(\mathbf{x})$: Distance from $\mathbf{x}$ to its $k$-th nearest neighbor in $X_{\text{ID}}^{\text{train}}$.
        \item $r_k(\mathbf{x}')$: Distance from $\mathbf{x}'$ to its $k$-th nearest neighbor in $X_{\text{ID}}^{\text{train}}$.
    \end{enumerate}
\end{enumerate}

\textbf{Calculating Distribution of Distances}

\textbf{Distance Between ID Samples
}

Let $\mathbf{x}, \mathbf{y} \sim \mathcal{N}(\boldsymbol{\mu}_{\text{ID}}, \sigma^2 I)$.
The difference $\mathbf{x} - \mathbf{y} \sim \mathcal{N}(\mathbf{0}, 2\sigma^2 I)$.

The squared distance $d^2(\mathbf{x}, \mathbf{y})$ is the sum of $D$ squared normal variables :
  $$
  d^2(\mathbf{x}, \mathbf{y}) = \sum_{i=1}^D (x_i - y_i)^2.
  $$
  where, each $(x_i - y_i)$ is $\sim \mathcal{N}(0, 2\sigma^2)$. Therefore, $\frac{d^2(\mathbf{x}, \mathbf{y})}{2\sigma^2} \sim \chi^2_D$, i.e. follows a chi-squared distribution with $ D $ degrees of freedom.

Expected and Variance of the Squared Distance:
  $$
  \mathbb{E}[d^2(\mathbf{x}, \mathbf{y})] = 2\sigma^2 D.
  $$
  $$
  \operatorname{Var}[d^2(\mathbf{x}, \mathbf{y})] =(2\sigma^2)^2 \cdot 2D = 4\sigma^4 (2D) = 8\sigma^4 D.
  $$

\vspace{1em}
\textbf{Distance Between ID and OOD Samples}

Let $\mathbf{x} \sim \mathcal{N}(\boldsymbol{\mu}_{\text{ID}}, \sigma^2 I)$ and $\mathbf{x}'' \sim \mathcal{N}(\boldsymbol{\mu}_{\text{OOD}}, \sigma^2 I)$.

\begin{itemize}
    \item The difference $\mathbf{x} - \mathbf{x}'' \sim \mathcal{N}(\boldsymbol{\delta}, 2\sigma^2 I)$, where $\boldsymbol{\delta} = \boldsymbol{\mu}_{\text{ID}} - \boldsymbol{\mu}_{\text{OOD}}$.
    \item The squared distance follows a non-central chi-squared distribution:
  $$
  \frac{d^2(\mathbf{x}, \mathbf{x}'')}{2\sigma^2} \sim \chi^2_D\left(\lambda\right),
  $$
  where $\lambda = \frac{\|\boldsymbol{\delta}\|^2}{2\sigma^2}$.
  \item Therefore, the Expected Squared Distance of a non-central chi-squared distribution is:
  $$
  \mathbb{E}[d^2(\mathbf{x}, \mathbf{x}'')] = 2\sigma^2 (D + \lambda) = 2\sigma^2 D + \Delta^2.
  $$
  \item And the Variance of Squared Distance of a non-central chi-squared distribution is
    $$
  \operatorname{Var}[d^2(\mathbf{x}, \mathbf{x}'')] = 4\sigma^4 (2D + 4\lambda) = 8\sigma^4 D + 16\sigma^4 \lambda.
  $$
\end{itemize}

\vspace{1em}

\textbf{Expected $k$-Nearest Neighbor Distance $r_k(\mathbf{x})$}

\begin{itemize}
    \item Distances from $\mathbf{x}$ to other training samples are i.i.d. and follow the distribution of $d(\mathbf{x}, \mathbf{y})$ as previously defined.
    \item Empirical CDF of Distances:$$F(d) = P\left(d(\mathbf{x}, \mathbf{y}) \leq d\right).$$
    \item Expected $k$-th Nearest Neighbor Distance:
    \begin{enumerate}
        \item For large $D$ (number of dimensions in the vector), the chi-squared distribution can be approximated by a normal distribution:
        $$
        \chi^2_D \approx \mathcal{N}\left(D, 2D\right).
        $$
        \item This yields standard normal statistic
        $$
        \dfrac{\frac{d^2}{2\sigma^2} - D}{\sqrt{2D}} \sim \mathcal{N}(0, 1)
        $$
        \item Since p-value of a test statistic follows a standard uniform distribution under the null hypothesis:
        $$
        \Phi\left(\dfrac{\frac{d^2}{2\sigma^2} - D}{\sqrt{2D}}\right) \sim u(0, 1)
        $$
        \item If $f$ is a monotonically increasing function of variable $X$ whose $k$th order statistic in a sample of size $n$ is $X_{(k)}$, then the $k$th order statistic of $f(X)$ is $f(X_{(k)})$ when $X$ is taken from the sample. Note that $\Phi$ is monotonically increasing as it is a CDF, and so is $f(X) = \dfrac{\frac{X^2}{2\sigma^2} - D}{\sqrt{2D}}$ on the positive reals which are the domain of $r_k(\mathbf{x})$.
        \item By definition $d_{(k)} = r_k(\mathbf{x})$ for some $\mathbf{x}$. Then if $U = \Phi\left(\dfrac{\frac{d^2}{2\sigma^2} - D}{\sqrt{2D}}\right)$ is a standard uniform, its $k$th order statistic in the sample is 
        $$
        U_{(k)} = \Phi\left(\dfrac{\frac{(r_{k}(\mathbf{x}))^2}{2\sigma^2} - D}{\sqrt{2D}}\right)
        $$
        \item The $k$th order statistic of a standard uniform follows a Beta distribution with $\alpha = k$ and $\beta = n - k + 1$:
        $$
        \Phi\left(\dfrac{\frac{(r_{k}(\mathbf{x}))^2}{2\sigma^2} - D}{\sqrt{2D}}\right) \sim \text{Beta}(k, n - k + 1)
        $$
        \item Because the expected value of a Beta distribution is $\frac{\alpha}{\alpha + \beta}$:
        $$
        \mathbb{E}\left[\Phi\left(\dfrac{\frac{(r_{k}(\mathbf{x}))^2}{2\sigma^2} - D}{\sqrt{2D}}\right)\right] = \frac{k}{n + 1}
        $$
    \end{enumerate}
    
\end{itemize}

\vspace{2em}

\textbf{Expected PRDC Metrics for ID Samples}

\textbf{Expected Precision per point} ($\mathbb{E}[\text{P}(\mathbf{x}')]$)

\begin{itemize}
    \item Definition:
  $$
  \mathbb{E}[\text{P}(\mathbf{x}')] = \mathbb{E}[\min_{\mathbf{x} \in \mathbf{X^{\text{train}}_{\text{ID}}}}P\left(d(\mathbf{x}', \mathbf{x}) \leq r_k(\mathbf{x})\right)] = 1 - \prod_{\mathbf{x} \in X_{\text{ID}}^{\text{train}}} \left(1 - \mathbb{E}[P\left(d(\mathbf{x}', \mathbf{x}) \leq r_k(\mathbf{x})\right)]\right).
  $$
  \item Also:
  $$
  P\left(d(\mathbf{x}', \mathbf{x}) \leq r_k(\mathbf{x})\right)] = P\left(\dfrac{\frac{d(\mathbf{x}', \mathbf{x})^2}{2\sigma^2} - D}{\sqrt{2D}} \leq \dfrac{\frac{(r_{k}(\mathbf{x}))^2}{2\sigma^2} - D}{\sqrt{2D}}\right) = \Phi\left(\dfrac{\frac{(r_{k}(\mathbf{x}))^2}{2\sigma^2} - D}{\sqrt{2D}}\right)
  $$
  \item Probability for a Single Training Sample, using the above Beta distribution formula, is
  $$
  \mathbb{E}\left[P\left(d(\mathbf{x}', \mathbf{x}) \leq r_k(\mathbf{x})\right)\right] = \mathbb{E}\left[\Phi\left(\dfrac{\frac{r_k^2(x)}{2\sigma^2} - D}{\sqrt{2D}}\right)\right] = \dfrac{k}{N_{\text{train}} + 1}
  $$
  
  \item Expected Precision per point:
  $$
  $$
  Because we are dealing with in-distribution data and p-values are uniformly distributed under the null hypothesis, 
  $$
  \mathbb{E}[\text{P}(\mathbf{x}')] = 1 - \left(1 - \frac{k}{N_{\text{train}}+1}\right)^{N_{\text{train}}} \approx 1 - e^{-k}.
  $$
  since, 
  $$
  \lim_{n\to\infty} \left(1- \frac{k}{n+1}\right)^n = \lim_{n\to\infty} \dfrac{\left(1 - \frac{k}{n + 1}\right)^{n + 1}}{1 - \frac{k}{n + 1}} = \dfrac{e^{-k}}{1 - 0} = e^{-k}
  $$
\end{itemize}

\textbf{Expected Recall per point ($\mathbb{E}[\text{R}(\mathbf{x}')]$)}
\begin{itemize}
    \item Definition:
  $$
  \text{R}(\mathbf{x}') = \frac{1}{N_{\text{train}}} \sum_{\mathbf{x} \in X_{\text{ID}}^{\text{train}}} \mathbb{I}\left[d(\mathbf{x}', \mathbf{x}) \leq r_k(\mathbf{x}')\right].
  $$
  % \item From our earlier results for precision, we have that the expected value of the indicator function is $\frac{k}{N_{\text{train}}+1}$, yielding the Expected Value for Recall per point as:
  % $$
  % \mathbb{E}[\text{R}(\mathbf{x}')] = \frac{1}{N_{\text{train}}} \times N_{\text{train}} \times \frac{k}{N_{\text{train}}+1} = \frac{k}{N_{\text{train}}+1}.
  % $$
  \item Understanding $r_k(\mathbf{x}')$:
   For each $\mathbf{x}'$, $r_k(\mathbf{x}')$ is the distance to its $k$-th nearest neighbor among the $N_{\text{train}}$ training samples.
   The distances $d_i = d(\mathbf{x}', \mathbf{x}_i)$ for $i = 1, \dots, N_{\text{train}}$ are independent and identically distributed (i.i.d.) random variables because $\mathbf{x}'$ and $\mathbf{x}_i$ are independent samples from the same distribution.
   \item Probability that a Training Sample is within $r_k(\mathbf{x}')$: The variable $\mathbb{I}\left[d(\mathbf{x}', \mathbf{x}_i) \leq r_k(\mathbf{x}')\right]$ is an indicator that is 1 if $\mathbf{x}_i$ is among the $k$ nearest neighbors of $\mathbf{x}'$ in the training set. The probability that a particular training sample $\mathbf{x}_i$ is within the $k$-nearest neighbors of $\mathbf{x}'$ is:
     $$
     P\left(d(\mathbf{x}', \mathbf{x}_i) \leq r_k(\mathbf{x}')\right) = \frac{k}{N_{\text{train}}}
     $$ This is since the $k$ nearest neighbors are chosen among $N_{\text{train}}$ samples, each training sample has an equal chance of $\frac{k}{N_{\text{train}}}$ of being among the closest $k$ samples to $\mathbf{x}'$.
        $$
   \mathbb{E}\left[\mathbb{I}\left[d(\mathbf{x}', \mathbf{x}_i) \leq r_k(\mathbf{x}')\right]\right] = \frac{k}{N_{\text{train}}}
   $$

     $$
   \mathbb{E}[\text{R}(\mathbf{x}')] = \frac{1}{N_{\text{train}}} \sum_{i=1}^{N_{\text{train}}} \mathbb{E}\left[\mathbb{I}\left[d(\mathbf{x}', \mathbf{x}_i) \leq r_k(\mathbf{x}')\right]\right] = \frac{1}{N_{\text{train}}} \times N_{\text{train}} \times \frac{k}{N_{\text{train}}} = \frac{k}{N_{\text{train}}}
   $$

        $$
     \mathbb{E}[\text{R}(\mathbf{x}')] = \frac{k}{N_{\text{train}}}
     $$
\end{itemize}

\textbf{Expected Density per point ($\mathbb{E}[\text{D}(\mathbf{x}')]$)}

\begin{itemize}
    \item Definition:
  $$
  \text{D}(\mathbf{x}') = \frac{1}{k} \sum_{\mathbf{x} \in X_{\text{ID}}^{\text{train}}} \mathbb{I}\left[d(\mathbf{x}', \mathbf{x}) \leq r_k(\mathbf{x})\right].
  $$
  \item Expected Sum, for large N:
  $$
  \mathbb{E}\left[\sum_{\mathbf{x} \in X_{\text{ID}}^{\text{train}}} \mathbb{I}\left[d(\mathbf{x}', \mathbf{x}) \leq r_k(\mathbf{x})\right]\right] = N_{\text{train}} \times \frac{k}{N_{\text{train}}+1} \approx k.
  $$
  \item Expected Density per point:
  $$
  \mathbb{E}[\text{D}(\mathbf{x}')] \approx \frac{k}{k} = 1.
  $$
\end{itemize}

\textbf{Expected Coverage per point ($\mathbb{E}[\text{C}(\mathbf{x}')]$)}
\begin{itemize}
  %   \item  Definition is similar to Precision per point. Therefore, Expected Coverage per point:
  % $$
  % \mathbb{E}[\text{C}(\mathbf{x}')] \approx 1 - e^{-k}.
  % $$ 
  \item Given both $\mathbf{x}'$ and the training samples $\mathbf{x}$ are drawn from the same distribution $\mathcal{N}(\boldsymbol{\mu}_{\text{ID}}, \sigma^2 I)$.
  \item  The minimum distance $\min_{\mathbf{x}_i} d(\mathbf{x}', \mathbf{x}_i)$ is the smallest among $N_{\text{train}}$ distances drawn from the chi-squared distribution.
  \item Distance to $k$-th Nearest Neighbor $r_k(\mathbf{x}')$ is $r_k(\mathbf{x}')$ is the $k$-th order statistic of the distances $d(\mathbf{x}', \mathbf{x}_i)$. We are interested in the probability:
     $$
     P\left(\min_{\mathbf{x}_i} d(\mathbf{x}', \mathbf{x}_i) \leq r_k(\mathbf{x}')\right)
     $$
  \item Since the minimum distance is always less than or equal to the $k$-th smallest distance, the event is always true:
       $$
       \min_{\mathbf{x}_i} d(\mathbf{x}', \mathbf{x}_i) \leq r_k(\mathbf{x}')
       $$
     Therefore:
       $$
       P\left(\min_{\mathbf{x}_i} d(\mathbf{x}', \mathbf{x}_i) \leq r_k(\mathbf{x}')\right) = 1
       $$
       This is because the closest training sample to $\mathbf{x}'$ is by definition among the $k$-nearest neighbors used to compute $r_k(\mathbf{x}')$.
Therefore:
     $$
     \mathbb{E}[\text{C}(\mathbf{x}')] = 1
     $$

\end{itemize}

\textbf{Expected PRDC Metrics for OOD Samples}

\begin{itemize}
    \item Due to the large distance $\Delta$, the probability that an OOD sample is within $r_k(\mathbf{x})$ of any ID training sample is negligible.
    \item Therefore:
    \begin{enumerate}
        \item $\mathbb{E}[\text{P}(\mathbf{x}'')] \approx 0$.
        \item $\mathbb{E}[\text{R}(\mathbf{x}'')] \approx 0$.
        \item $\mathbb{E}[\text{D}(\mathbf{x}'')] \approx 0$.
        \item $\mathbb{E}[\text{C}(\mathbf{x}'')] \approx 0$.
    \end{enumerate}
\end{itemize}

\textbf{Variances of PRDC Metrics}

\textbf{Variance of Precision and Coverage per point}

\begin{enumerate}
    \item Since these are Bernoulli random variables:
  $$
  \operatorname{Var}[\text{P}(\mathbf{x}')] = \mathbb{E}[\text{P}(\mathbf{x}')] \left(1 - \mathbb{E}[\text{P}(\mathbf{x}')]\right) = e^{-k} - e^{-2k}.
  $$
  \item The variance of coverage per point for ID samples is zero. This reflects the certainty that ID test samples will always satisfy the coverage condition.

\end{enumerate}

\textbf{Variance of Recall per point}
\begin{enumerate}
    \item Recall per point is an average of $N_{\text{train}}$ Bernoulli variables with success probability $$p = \frac{k}{N_{\text{train}}}$$.
    \item Because the sum of $N_{\text{train}}$ Bernoulli variables with probability $p$ is a binomial distribution and we are dividing that sum by $N_{\text{train}}$, the Variance becomes:
  $$
  \operatorname{Var}[\text{R}(\mathbf{x}')] = \frac{1}{N_{\text{train}}^2} \times N_{\text{train}} \times p \left(1 - p\right) = \frac{p(1 - p)}{N_{\text{train}}}.
  $$
\end{enumerate}

\textbf{Variance of Density per point}

\begin{itemize}
    \item Density per point sums $N_{\text{train}}$ Bernoulli variables and divides by $k$.
    \item Variance:
          $$
          \operatorname{Var}[\text{D}(\mathbf{x}')] = \frac{1}{k^2} \times N_{\text{train}} \times p \left(1 - p\right) = \frac{p(1 - p) N_{\text{train}}}{k^2}.
          $$
    \item Since $p = \frac{k}{N_{\text{train}}+1}$, we get:
  $$
  \operatorname{Var}[\text{D}(\mathbf{x}')] \approx \frac{1}{k} \left(1 - \frac{k}{N_{\text{train}}}\right).
  $$
  \item For large $N_{\text{train}}$, this simplifies to:
  $$
  \operatorname{Var}[\text{D}(\mathbf{x}')] \approx \frac{1}{k}.
  $$
\end{itemize}

\begin{itemize}
    \item For ID samples, per-point Precision and Coverage have expected values close to $1 - e^{-k}$, which increases with $k$.
    \item Per-point Recall is small when $k \ll N_{\text{train}}$, with $\mathbb{E}[\text{R}(\mathbf{x}')] \approx \frac{k}{N_{\text{train}}}$.
    \item Per-point Density is approximately 1 for ID samples, regardless of $k$.
    \item For OOD samples, all per-point PRDC metrics are approximately zero due to the large mean difference $\Delta$.
    \item Variances depend on $k$ and $N_{\text{train}}$, but are generally small for large datasets.
\end{itemize}

\end{proof}

\textbf{Note on Assumptions and Approximations}
\begin{itemize}
    \item Independence Assumption: While distances are not strictly independent, the independence approximation simplifies calculations and is reasonable for large datasets.
    \item Large $D$ Approximation: Approximations using the normal distribution for the chi-squared distribution are valid when $D$ is large.
    \item Small $k$ Relative to $N_{\text{train}}$: The results are most accurate when $k \ll N_{\text{train}}$.
\end{itemize}

\begin{figure}
    \centering
    \includegraphics[width=1\linewidth]{./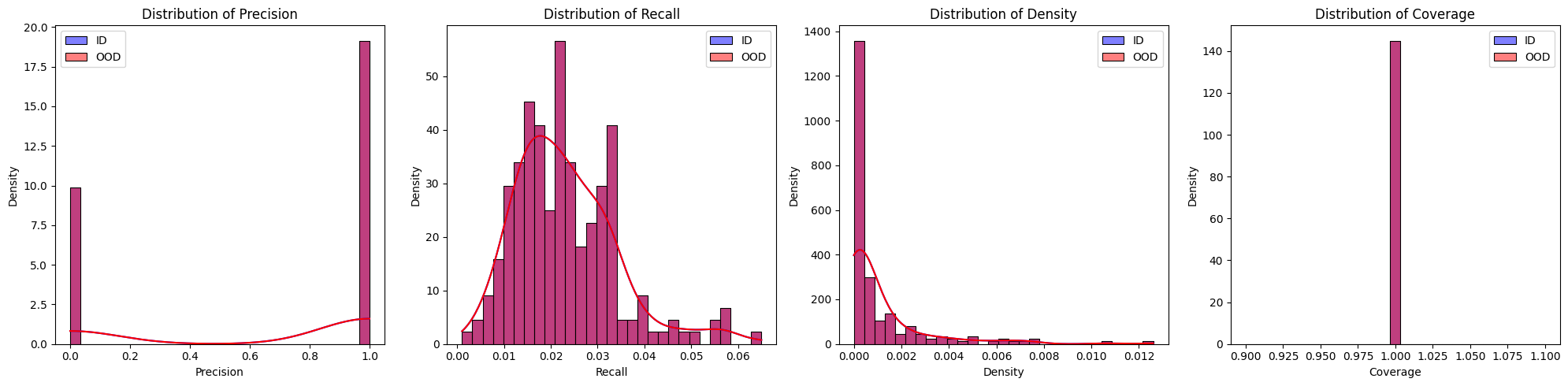}
    \caption{Distribution of PRDC metrics for the scenario where both the training (ID) and test (ID) data are drawn from the same Gaussian distribution with zero mean. Blue lines represent the ID data, and red lines represent the OOD data (which is actually the same as ID in this case). The histograms overlap entirely, indicating identical distributions for all metrics due to the same underlying data.}
    \label{fig:enter-label}
\end{figure}

\begin{figure}
    \centering
    \includegraphics[width=1\linewidth]{./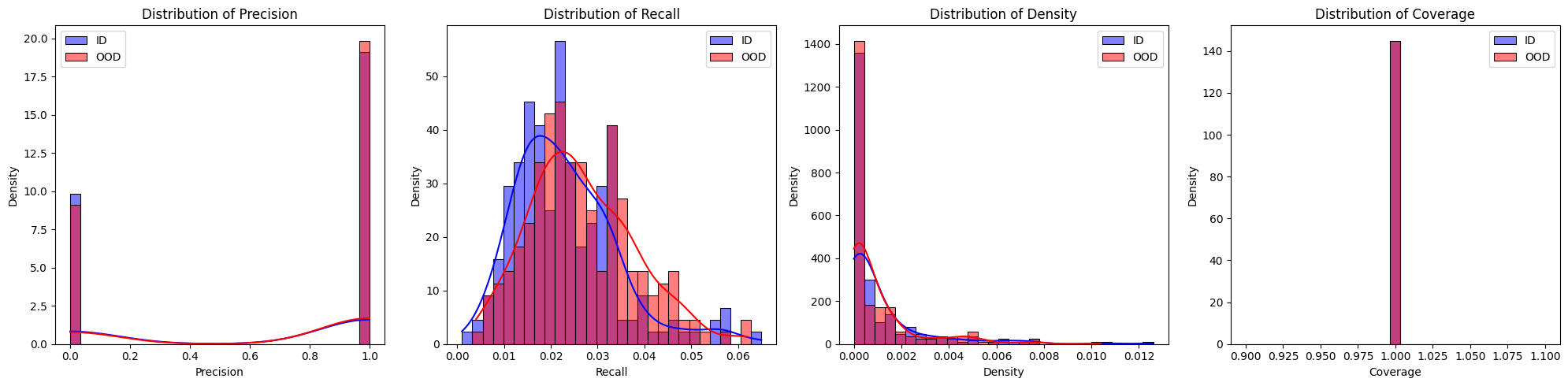}
    \caption{Distribution of PRDC metrics for the scenario with a moderate shift, where the OOD test data is generated from a Gaussian distribution with its mean shifted by 0.01 units compared to the ID data. Blue lines represent the ID data, and red lines represent the moderately shifted OOD data. The histograms show noticeable differences and partial overlap between ID and OOD distributions, reflecting the moderate shift in data.}
    \label{fig:enter-label}
\end{figure}

\begin{figure}
    \centering
    \includegraphics[width=1\linewidth]{./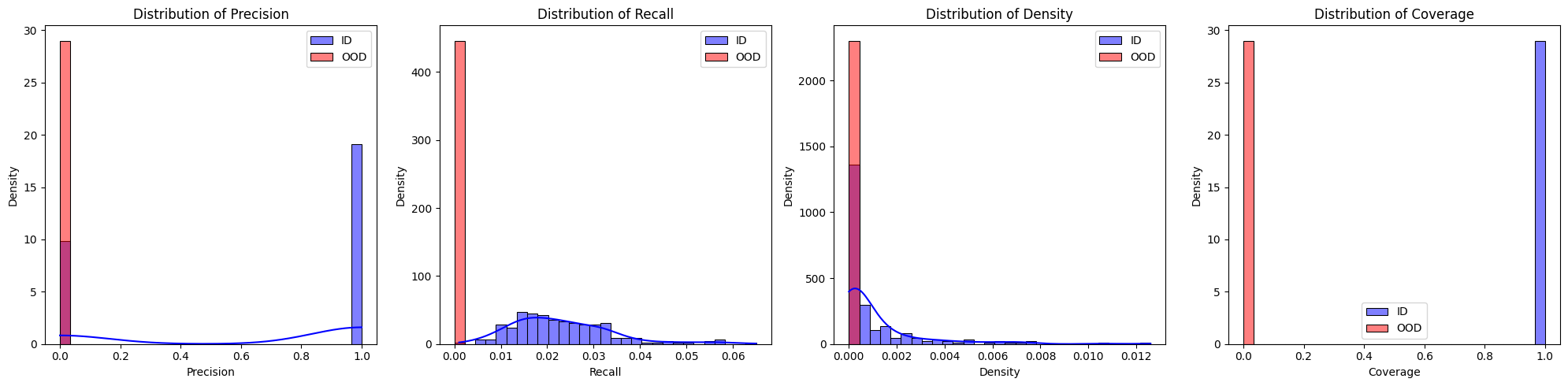}
    \caption{Distribution of PRDC metrics for the scenario with a large shift, where the OOD test data is generated from a Gaussian distribution with its mean shifted by 5 units from the ID data. Blue lines represent the ID data, and red lines represent the significantly shifted OOD data. The histograms display minimal overlap, indicating that the PRDC metrics effectively distinguish between the ID and heavily shifted OOD data.}
    \label{fig:enter-label}
\end{figure}

\begin{figure}
    \centering
    \includegraphics[width=1\linewidth]{./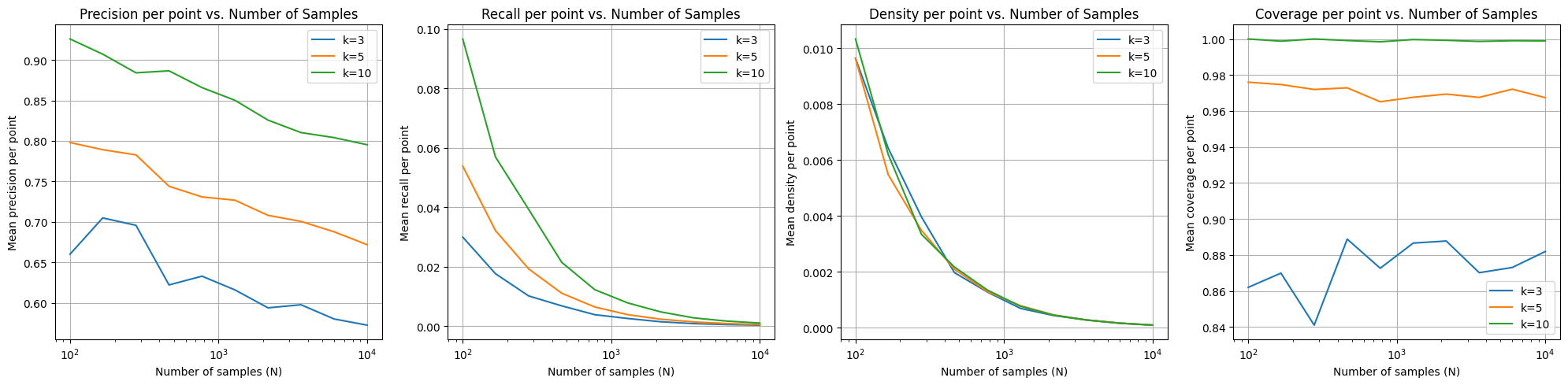}
    \caption{ Line plots showing the mean PRDC metrics—Precision, Recall, Density, and Coverage—per data point as a function of the number of samples (N) for varying values of k (number of nearest neighbors). Each subplot corresponds to one of the PRDC metrics, with different lines representing k=3, k=5, and k=10. The semi-logarithmic x-axis emphasizes the impact of sample size on the stability and reliability of the PRDC metrics across different neighborhood sizes. Notice that in precision and coverage, the variance goes down considerably (theoretically, exponentially) on the basis of k, supported by our theorem.}
    \label{fig:enter-label}
\end{figure}

\subsection{Robustness to the curse of dimensionality}
\textbf{Intuition: }The curse of dimensionality often manifests as the concentration of distances in high-dimensional spaces. By reducing to four scalar metrics, PRDC sidesteps many of these issues. Let $X_n$ be uniformly distributed in the $n$-dimensional unit ball. As $n \rightarrow \infty$:
$\frac{\max_{1\leq i < j \leq k} |X_i - X_j|}{\min_{1\leq i < j \leq k} |X_i - X_j|} \stackrel{P}{\rightarrow} 1$
for any fixed $k$. As the number of dimensions n increases, the ratio of the maximum to minimum distances between any pair of k points approaches 1 in probability. This means that the distances between points become almost the same, regardless of their specific locations in the unit ball. This contributes to why directly using high dimensional representations does not help.

PRDC metrics are not directly affected by this distance concentration, as they capture relative rather than absolute distances. These metrics provide more meaningful comparisons between datasets, especially in high-dimensional spaces where traditional distance measures lose their discriminatory power. By emphasizing relative rather than absolute distances, PRDC metrics are less affected by the concentration of distances in high-dimensional spaces.

\textbf{Experiment: }We want to understand how increasing the dimensionality of data affects the ability of PRDC metrics and other measures to distinguish between inliers and outliers.

We use the following methodology for data generation :
\begin{itemize}
    \item Inliers: For each dimension $ D $ ranging from 2 to 200 (in steps of 5), we generate 1000 samples from a $ D $-dimensional standard normal distribution centered at the origin:
  $$
  \mathbf{x}_{\text{inlier}} \sim \mathcal{N}(\mathbf{0}, I_D),
  $$
  where $ I_D $ is the $ D \times D $ identity matrix.

  \item We generate 100 samples from a $ D $-dimensional normal distribution centered at $ \boldsymbol{\mu} = [3, 3, \dots, 3] $:
  $$
  \mathbf{x}_{\text{outlier}} \sim \mathcal{N}(3\mathbf{1}, I_D),
  $$
  where $ \mathbf{1} $ is a vector of ones.
\end{itemize}

We calculate the PRDC per point metrics, and then the following additional measures :
\begin{itemize}
    \item Mean Euclidean Distance:
   $$
   \text{Mean Distance} = \frac{1}{N} \sum_{i=1}^{N} \| \mathbf{x}_i \|_2,
   $$
   where $ N $ is the number of inliers.

    \item Mean Cosine Similarity:
   $$
   \text{Mean Cosine Similarity} = \frac{2}{N(N - 1)} \sum_{i < j} \frac{\mathbf{x}_i^\top \mathbf{x}_j}{\| \mathbf{x}_i \|_2 \| \mathbf{x}_j \|_2}.
   $$

    \item PCA Variance Explained: Sum of variance explained by the first two principal components obtained from Principal Component Analysis (PCA) on the inliers.

\end{itemize}

We iterate over each dimension $ D $ and:

\begin{enumerate}
    \item Generate inliers and outliers.
    \item Compute PRDC metrics for inliers vs. inliers and inliers vs. outliers.
    \item Compute additional measures.
    \item Record the results for analysis.
\end{enumerate}

\textbf{Observations}

\begin{figure}
    \centering
    \includegraphics[width=1\linewidth]{./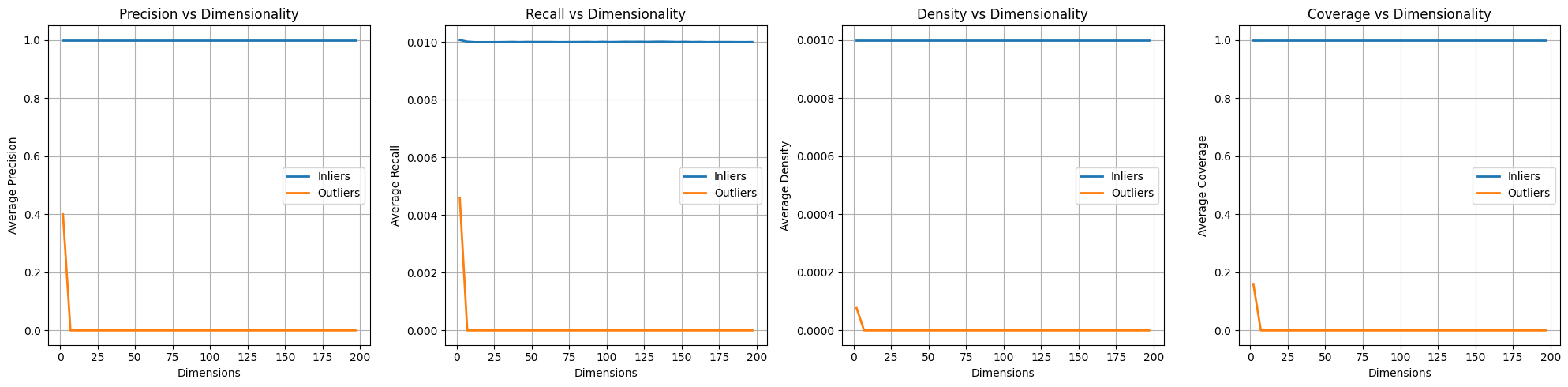}
    \caption{Precision, Recall, Density, and Coverage (PRDC) metrics plotted against the number of dimensions for Inlier (ID) and Outlier (OOD) datasets. In each subplot, the blue line represents the average PRDC metric for inliers compared against themselves, while the orange line depicts the average PRDC metric for outliers compared against inliers. This figure illustrates how increasing dimensionality impacts the effectiveness of PRDC metrics in distinguishing between inliers and outliers across different degrees of distributional shift.}
    \label{fig:enter-label}
\end{figure}

\begin{figure*}
    \centering
    \begin{subfigure}[t]{0.48\textwidth}
    \centering
        \includegraphics[height=1.6in]{./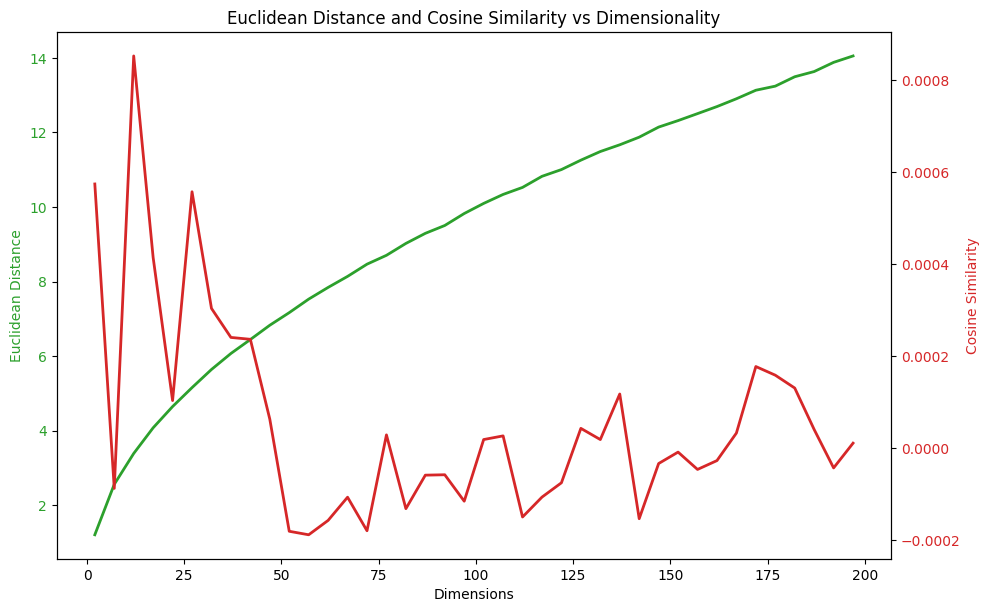}
        \caption{Average Euclidean Distance from the Origin and Average Cosine Similarity among Inliers plotted against the number of dimensions for Inlier (ID) datasets. The left y-axis (colored in green) shows the mean Euclidean distance of inliers from the origin, while the right y-axis (colored in orange) displays the mean cosine similarity among inliers. This dual-axis plot highlights how higher dimensionality leads to increased Euclidean distances and decreased cosine similarities, reflecting the effects of the curse of dimensionality on data geometry.}
    \end{subfigure}
    ~
    \begin{subfigure}[t]{0.48\textwidth}
    \centering
        \includegraphics[height=1.6in]{./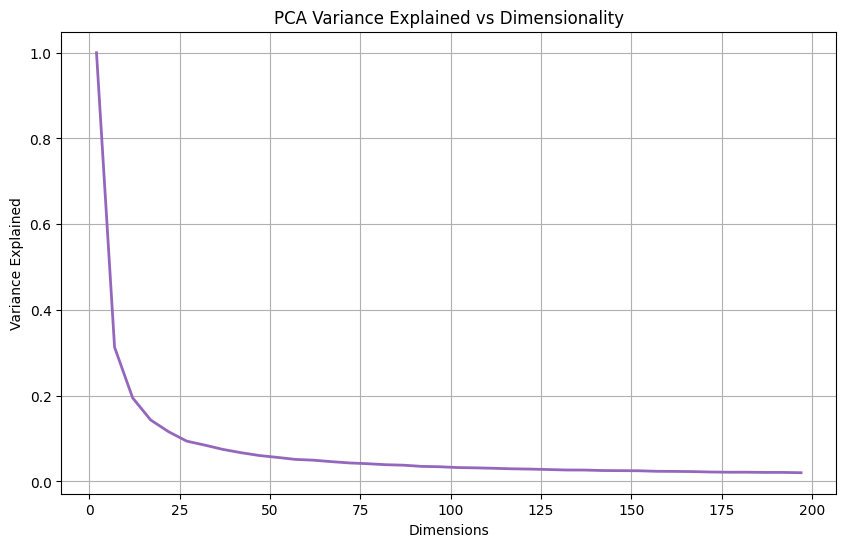}
        \caption{Total variance explained by the first two Principal Components (PC1 and PC2) of Inlier (ID) datasets plotted against the number of dimensions. The blue line represents the cumulative variance captured by the first two principal components. As dimensionality increases, the proportion of variance explained by the first two components decreases, indicating that data variance becomes more dispersed across additional dimensions. This trend underscores the challenges of dimensionality reduction in high-dimensional spaces, sidestepped by Forte.} 
    \end{subfigure}

    \label{fig:enter-label}
\end{figure*}

\begin{itemize}
    \item \textbf{Precision:} Inliers exhibit high precision across dimensions, indicating that inliers are close to each other. For outliers, precision decreases with increasing $D$, and then stabilizes, due to increased distances in high-dimensional spaces.
    
    \item \textbf{Recall:} Inliers show higher recall, reflecting better coverage among inliers. Outliers on the other hand show lower recall, as outliers do not cover the inlier distribution well.
    
    \item \textbf{Density:} Inliers demonstrate high density, indicating dense clustering. For outliers, density decreases with $D$, showing sparse connections to inliers.
    
    \item \textbf{Coverage:} Inliers exhibit high coverage, demonstrating that they effectively cover the inlier distribution. Outlier coverage decreases with $D$, indicating poor coverage of inliers by outliers.
    
    \item \textbf{Mean Euclidean Distance:} This metric increases with $D$ for both inliers and outliers, due to the phenomenon where distances between points in high-dimensional spaces tend to increase.
    
    \item \textbf{Mean Cosine Similarity:} This measure decreases with $D$, approaching zero for both inliers and outliers, as random vectors in high dimensions become nearly orthogonal.
    
    \item \textbf{PCA Variance Explained:} This metric decreases with $D$ for both inliers and outliers, as variance is distributed among more components, making it harder to capture significant variance in the first few components.
    
\end{itemize}

\subsection{PRDC as local transformations}

A local transformation is a function applied to a vector in a space which depends not only on the vector itself but also on its neighboring vectors. This allows for contextual dependence, where the transformation depends on the local environment of the vector, therefore making it sensitive to local variations and patterns. This also allows the local transformations to be non-linear, allowing for complex manipulations that are not possible with global linear transformations. This means they can adapt to regions of the data space, capturing variations and features that might be missed by global transformations.

These are obviously very powerful, as evidenced by convolution kernels (depending on neighbour pixels), Graph convolutions, Laplacian smoothing, and wavelet transforms.  However, they can be sensitive to parameters like kernel size, weights, and tuning is important to ensure robustness. PRDC metrics might look very different at different k values, for example.

\subsection{Key Properties and Characteristics of the Forte Algorithm}
At its core, Forte leverages locality, basing its per-point PRDC (Precision, Recall, Density, Coverage) metrics on local neighborhood structures. This approach makes the algorithm sensitive to local variations in data distribution. Simultaneously, Forte performs dimensionality reduction, compressing high-dimensional data ($D \gg 4$) into a more manageable $\mathbb{R}^4$ space while preserving essential information for OOD detection. This reduction not only aids in mitigating overfitting but also improves generalization and computational efficiency.

A key strength of Forte lies in its model-agnostic nature, allowing it to work with any feature extractor that provides meaningful representations in $\mathbb{R}^D$. Self-supervised models like CLIP, ViTMSN, and DINOv2 are particularly effective in this context due to their rich feature representations. The algorithm's non-parametric approach, which avoids assumptions about the data distribution's form, contributes to its flexibility and robustness. This is further enhanced by the tunable parameter $k$ (number of nearest neighbors), which allows for balancing sensitivity to local structures with noise robustness.

Forte's effectiveness in OOD detection stems from its ability to capture both density (proximity of neighbors) and coverage (sample's position within the support of $P_{\text{ID}}$). By focusing on local neighborhood information, the algorithm can detect subtle discrepancies between in-distribution and OOD samples, particularly effective when OOD samples reside in low-density regions. The use of multiple metrics (precision, recall, density, and coverage) provides a holistic view of how each sample relates to the training data $X_{\text{ID}}^{\text{train}}$, enhancing the algorithm's discriminative power. The local transform employed by Forte also confers robustness to feature space variability, an important consideration when working with different self-supervised learning models. By normalizing differences through a focus on relative distances within the feature space, Forte maintains consistency across varied feature representations. Empirical results presented in this paper have comprehensively demonstrated Forte's superior performance compared to traditional methods, showcasing how the combination of powerful feature extractors and the local transform leads to effective OOD detection.

\section{Appendix : Ablation Study on Encoders}
\textbf{Insight into encoders:}  The results presented in Table. \ref{tab:ablation} demonstrate that richer representations significantly improve OOD detection performance, as evidenced by the ranking of encoders: CLIP > DINO v2 > MSN when used individually, and CLIP + DINO v2 > CLIP + MSN > DINO v2 + MSN in the two-model combinations. To further investigate this phenomenon, we conducted additional experiments. Specifically, we included DeIT with both ViT-B (Base) and ViT-Ti (Tiny) models and evaluated their OOD detection performance under the settings studied in Tables 2. These results, show that DeIT-B achieves 0.87 AUROC on CIFAR-100, while DeIT-Ti achieves 0.82 AUROC. This aligns with our hypothesis that more informative representations are essential for effective OOD detection. DeIT, trained with an objective approximating supervision, produces less informative embeddings compared to self-supervised encoders like DINO v2. Similarly, the DeIT-Ti model performs worse due to its reduced capacity for generating robust representations. We think these findings provide valuable insights into the utility of different encoders for OOD detection and offer guidance for practitioners seeking optimal performance.

\begin{table*}[!h]\centering\begin{threeparttable}\centering
\caption{Comparison of AUROC and FPR95 performance figures for Base and Tiny DeIT models across the tasks in }\label{tab:ablate-encoder-ood-results}
\scriptsize
\begin{tabular}{lcccc}\toprule
\textbf{Model} & \textbf{In-Dist} & \textbf{OOD Dataset} & \textbf{AUROC} & \textbf{FPR95} \\\cmidrule{1-5}
Base-DeIT & CIFAR-10 & CIFAR-100 & 0.8712 & 0.9926 \\
Tiny-DeIT      & CIFAR-10 & CIFAR-100 & 0.8261 & 0.9903 \\
Base-DeIT & CIFAR-10 & SVHN       & 0.9554 & 0.4604 \\
Tiny-DeIT      & CIFAR-10 & SVHN       & 0.9296 & 0.6195 \\
Base-DeIT & CIFAR-10 & Celeb-A    & 0.9871 & 0.0015 \\
Tiny-DeIT      & CIFAR-10 & Celeb-A    & 0.9929 & 0.0007 \\
\bottomrule
\end{tabular}
\end{threeparttable}\end{table*}

\section{Appendix: Medical Image Datasets}\label{sec:mri_dataset}
In medical imaging research, studies are often done using one in-house dataset. Conclusions and models drawn from these studies are then applied to new data, with poor results. In particular, MRI datasets exhibit strong batch effects that prevent them from being in-distribution relative to each other because some acquisition protocol is bound to be different between them. Moreover, dataset sizes are severely limited in clinical applications, which means a separate model cannot be trained for each batch. A single model cannot be robust to all MRI datasets of the same subject matter, even if they have similar acquisition parameters. Such datasets still carry enough differences to impact model performance.

To simulate this scenario, two public datasets are used for the experiments in Section \ref{sec:mri_results}: coronal knee MRI from FastMRI \cite{zbontar2018fastmri, knoll2020fastmri} with two subsets and Osteoarthritis Initiative (OAI) \cite{nevitt2006osteoarthritis} with three subsets. The acquisition parameters, including sequence and fat suppression are detailed in Table \ref{tab:mri-seq} and samples are shown in Figure \ref{fig:mri-img}. Treating the FastMRI dataset as in-distribution and assuming that models have been trained on them, Forte is used to determine the next course of action when confronted with the OAI dataset: 1) which subsets of the new dataset can be aligned with the existing subsets / models? 2) To what degree do these subsets diverge from each other? Using the FastMRI FS subset as in-distribution, the two OAI subsets (OAI T1 and OAI MPR) are tested for OOD detection. Similarly, the FastMRI NoFS subset is used as in-distribution and the OAI TSE subset is tested for OOD detection.

\begin{table*}[h]\centering
\begin{threeparttable}
\caption{Acquisition parameters for MRI, grouped by distributions as used in Section \ref{sec:mri_results}.}\label{tab:mri-seq}
\label{tab:coronal-pd-parameters}
\scriptsize
\begin{tabular}{lccccc}
\toprule
\textbf{Parameter} & \textbf{FastMRI NoFS} & \textbf{FastMRI FS} & \textbf{OAI TSE} & \textbf{OAI T1} & \textbf{OAI MPR}  \\
\midrule
\textbf{Sequence} & 2D TSE PD & 2D TSE PD & 2D TSE T1w & 3D FLASH T1w & 3D DESS T2w \\
\textbf{FOV (mm$^2$)} & 140 $\times$ 140 & 140 $\times$ 140 & 140 $\times$ 140 & 140 $\times$ 140 & 140 $\times$ 140 \\
\textbf{Matrix size} & 320 $\times$ 320 & 320 $\times$ 320 & 320 $\times$ 320 & 384 $\times$ 384 & 320 $\times$ 320\\
\textbf{Slice thickness (mm)} & 3 & 3 & 3 & 0.7 & 1.5 \\
\textbf{TR (ms)} & 2750--3000 & 2850--3000 & 800 & 9.7& 14.7\\
\textbf{TE (ms)} & 27--32 & 33 & 9 & 4.0 & 4.2\\
\textbf{Fat suppression} & No & Yes & No & Yes & Yes\\
\bottomrule
\end{tabular}
\end{threeparttable}
\end{table*}

\begin{figure*}[h]
    \centering
    % First row with 3 images
    \begin{subfigure}[b]{0.3\textwidth}
        \includegraphics[width=\textwidth]{./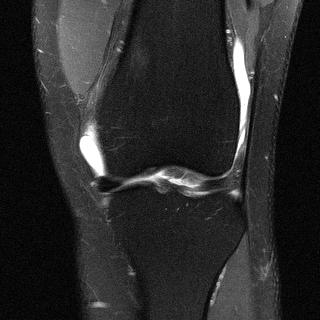}
        \caption{FastMRI FS}
    \end{subfigure}
    \hspace{0.2cm}
    \begin{subfigure}[b]{0.3\textwidth}
        \includegraphics[width=\textwidth]{./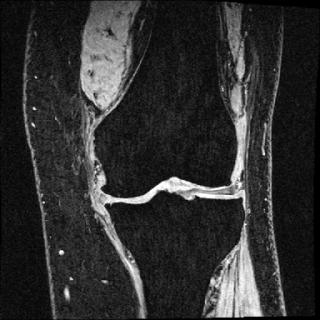}
        \caption{OAI T1}
    \end{subfigure}
    \hspace{0.2cm}
    \begin{subfigure}[b]{0.3\textwidth}
        \includegraphics[width=\textwidth]{./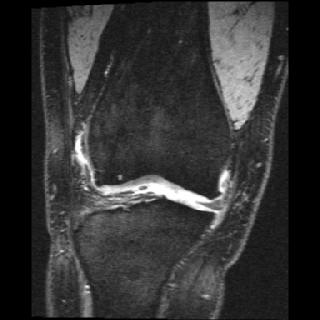}
        \caption{OAI MPR}
    \end{subfigure}
    
    % \vspace{0.5cm} % Adds vertical space between the rows

    % Second row with 2 images
    \begin{subfigure}[b]{0.3\textwidth}
        \includegraphics[width=\textwidth]{./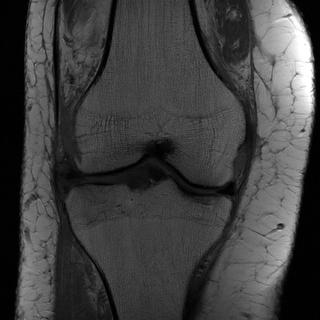}
        \caption{FastMRI NoFS}
    \end{subfigure}
    \hspace{0.5cm}
    \begin{subfigure}[b]{0.3\textwidth}
        \includegraphics[width=\textwidth]{./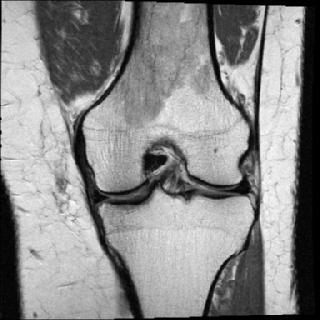}
        \caption{OAI TSE}
    \end{subfigure}
    \caption{Sample images from five total subsets of the OAI and FastMRI datasets. Fat-suppressed (a-c) and non fat-suppressed (d,e) subsets tested for OOD.}
    \label{fig:mri-img}
\end{figure*}

\section{Pseudocode}

\begin{algorithm}[H]
\caption{OOD Detection Using Per-Point PRDC Metrics in Forte}
\label{alg:ood-detection}
\begin{algorithmic}[1]
\Require Reference data features $\{ x_j^r \}_{j=1}^m$
\Require Test data features $\{ x_i^g \}_{i=1}^n$
\Require Number of nearest neighbors $k$
\Ensure OOD detection performance metrics: AUROC, FPR@95

\State \textbf{Feature Extraction (Preprocessing)}:
\Statex \quad Use pre-trained models (e.g., CLIP, ViT-MSN, DINOv2) to extract features for both reference and test data.
\Statex \quad Reference features: $\{ x_j^r \}_{j=1}^m$
\Statex \quad Test features: $\{ x_i^g \}_{i=1}^n$

\State \textbf{Compute Nearest Neighbor Distances}:
\For{$j = 1$ to $m$}
    \State Compute $\mathrm{NND}_k(x_j^r)$: distance to its $k$-th nearest neighbor in $\{ x_h^r \}_{h=1, h \ne j}^m$.
\EndFor
\For{$i = 1$ to $n$}
    \State Compute $\mathrm{NND}_k(x_i^g)$: distance to its $k$-th nearest neighbor in $\{ x_h^g \}_{h=1, h \ne i}^n$.
\EndFor

\State \textbf{Compute Per-Point PRDC Metrics for Test Data}:
\For{$i = 1$ to $n$}
    \State Compute per-point metrics for $x_i^g$ relative to $\{ x_j^r \}_{j=1}^m$:

    \begin{enumerate}
        \item \textbf{Precision per point}:
        $$
        \mathrm{precision}_{pp}^{(i)} = \mathds{1}\left( x_i^g \in S(\{ x_j^r \}_{j=1}^m) \right)
        $$
        where $S(\{ x_j^r \}_{j=1}^m) = \bigcup_{j=1}^m B\left( x_j^r, \mathrm{NND}_k( x_j^r ) \right)$.

        \item \textbf{Recall per point}:
        $$
        \mathrm{recall}_{pp}^{(i)} = \frac{1}{m} \sum_{j=1}^m \mathds{1}\left( x_j^r \in B\left( x_i^g, \mathrm{NND}_k( x_i^g ) \right) \right)
        $$

        \item \textbf{Density per point}:
        $$
        \mathrm{density}_{pp}^{(i)} = \frac{1}{k m} \sum_{j=1}^m \mathds{1}\left( x_i^g \in B\left( x_j^r, \mathrm{NND}_k( x_j^r ) \right) \right)
        $$

        \item \textbf{Coverage per point}:
        $$
        \mathrm{coverage}_{pp}^{(i)} = \mathds{1}\left( \min_{j=1,\dots,m} \| x_i^g - x_j^r \| < \mathrm{NND}_k( x_i^g ) \right)
        $$
    \end{enumerate}
    \State Assemble feature vector $\phi^{(i)} = \left[ \mathrm{precision}_{pp}^{(i)}, \mathrm{recall}_{pp}^{(i)}, \mathrm{density}_{pp}^{(i)}, \mathrm{coverage}_{pp}^{(i)} \right]$
\EndFor

\State \textbf{Prepare Reference Training and Validation Sets}:
\State Split $\{ x_j^r \}_{j=1}^m$ into training set $\{ x_j^{\text{train}} \}$ and validation set $\{ x_j^{\text{valid}} \}$.

\State \textbf{Compute Per-Point PRDC Metrics for Reference Training Data}:
\For{each $x_j^{\text{train}}$}
    \State Compute per-point metrics $\phi_{\text{ref}}^{(j)}$ following similar steps as above, relative to $\{ x_h^{\text{train}} \}_{h=1}^{m_{\text{train}}}$
\EndFor

\State \textbf{Train Anomaly Detection Models}:
\State Use the reference per-point metrics $\{ \phi_{\text{ref}}^{(j)} \}$ to train unsupervised anomaly detection models: One-Class SVM (OCSVM), Kernel Density Estimation (KDE), \& Gaussian Mixture Model (GMM).

\State \textbf{Evaluate Models on Test Data}:
\For{$i = 1$ to $n$}
    \State Compute anomaly scores $s^{(i)}$ for $\phi^{(i)}$ using the trained models. Use it to assign Ground Truth Labels, and calculate AUROC \& FPR@95
\EndFor

% \State \textbf{Assign Ground Truth Labels}:
% \For{$i = 1$ to $n$}
%     \If{$x_i^g$ is an in-distribution (ID) sample}
%         \State $y^{(i)} \gets 0$
%     \Else \Comment{$x_i^g$ is OOD}
%         \State $y^{(i)} \gets 1$
%     \EndIf
% \EndFor

% \State \textbf{Compute Evaluation Metrics}:
% \State Calculate AUROC and FPR@95 using $\{ s^{(i)}, y^{(i)} \}_{i=1}^n$.

\end{algorithmic}
\end{algorithm}

\textbf{Notes}:
\begin{itemize}
    \item $\mathds{1}(\cdot)$ is the indicator function, returning 1 if the condition is true, and 0 otherwise.
    \item $B(x, r)$ denotes a ball (in Euclidean space) centered at $x$ with radius $r$.
    \item $\mathrm{NND}_k(x)$ is the distance from point $x$ to its $k$-th nearest neighbor.
    \item $S(\{ x_j^r \}_{j=1}^m) = \bigcup_{j=1}^m B\left( x_j^r, \mathrm{NND}_k( x_j^r ) \right)$ represents the union of balls around each reference point $x_j^r$ with radius $\mathrm{NND}_k( x_j^r )$.
\end{itemize}

\section{Comparative Evaluation}
To contextualize the strong state-of-the-art performance achieved by Forte, we present the following infographics showing the evolution of methods and their performance on benchmark tasks.

\begin{figure}[!h]
    \centering
    \includegraphics[width=1\linewidth]{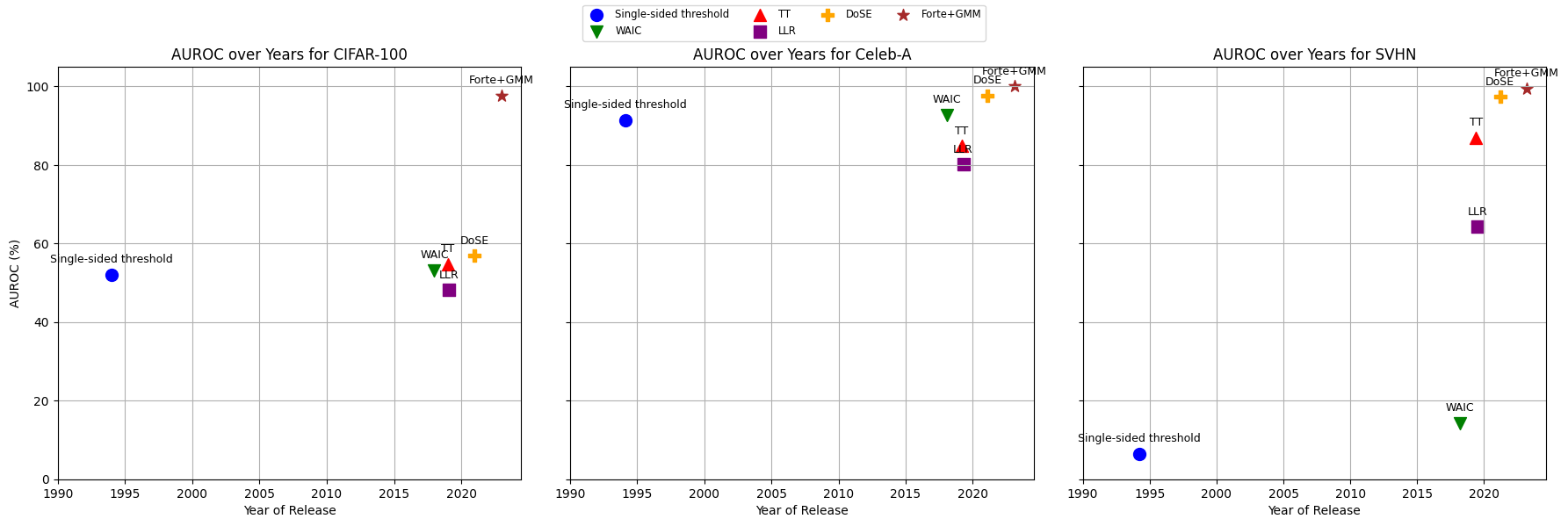}
    \caption{AUROC performance of various unsupervised OOD detection methods over years for the CIFAR-10 (In-distribution) and the CIFAR-100(OOD), Celeb-A(OOD) and SVHN(OOD) dataset. The figure illustrates the progression of anomaly detection techniques, with methods represented using unique markers and colors.}

    \label{fig:unsupervised_techniques_comparison}
\end{figure}

\begin{figure}[!h]
    \centering
    \includegraphics[width=1\linewidth]{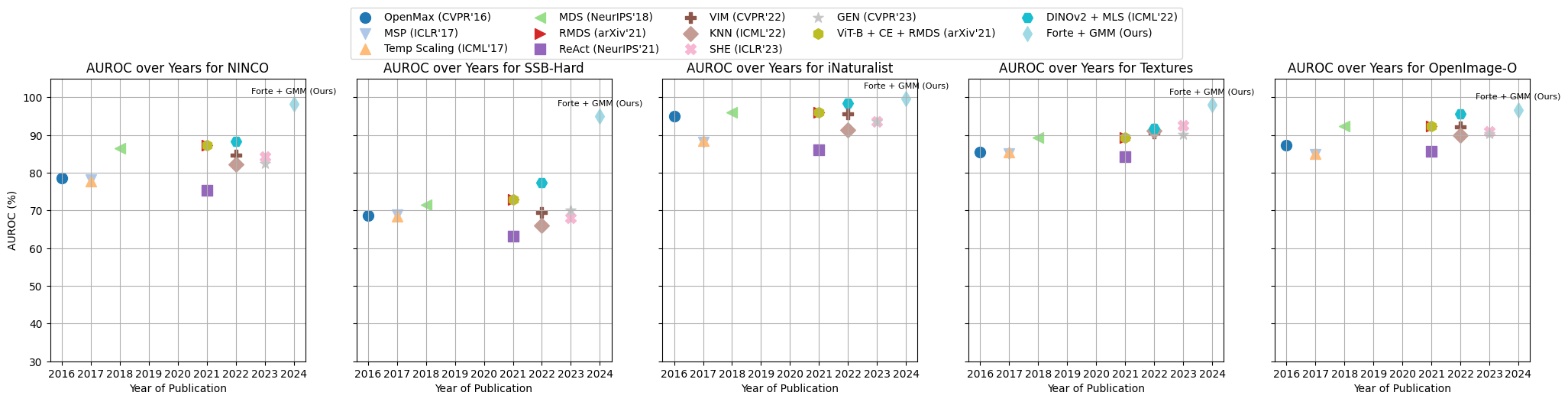}
    \caption{Comparison of AUROC performance across various supervised out-of-distribution detection methods and datasets from the OpenOOD leaderboard. The figure presents results for five datasets (NINCO, SSB-Hard, iNaturalist, Textures, and OpenImage-O) with each subplot showcasing the progression of AUROC scores over the years. Each method is represented with a unique marker and color. Our method "Forte + GMM" is highlighted for its superior performance, demonstrating strong state-of-the-art results across all datasets. The x-axis represents the publication year, while the y-axis denotes the AUROC (\%) scores.}

    \label{fig:supervised_techniques_comparison}
\end{figure}

\section{Miscellaneous Related Works}

\textbf{Data augmentation using generative models,} particularly diffusion models, has shown promise in improving model performance and generalization. Methods include using pretrained models to generate variations of existing data \citep{luzi2022boomerang,sariyildiz2022fake}, fine-tuning models on specific subjects or concepts \citep{gal2022image, gal2023designing, kawar2023imagic}, and generating synthetic datasets for downstream tasks \citep{shipard2023diversity, roy2022diffalign, packhauser2023generation, ghalebikesabi2023differentially, zhang2024expanding, trabucco2023effective, karras2020training, bansal2023leaving, akrout2023diffusion}. Other notable examples include TransMix \cite{chen2022transmix} and MixPro \cite{zhao2023mixpro}, which have demonstrated strong performance on the ImageNet classification task, However, the effectiveness of these methods is lower compared to traditional data augmentation and retrieval baselines \citep{zietlow2022leveling,azizi2023synthetic, burg2023data}. Potential biases introduced by synthetic data and the need to detect out-of-distribution generated samples should be considered when employing these techniques.

The concept of score-based likelihood maximization is fundamental to diffusion models, inherently guiding the reverse generation process towards maximizing the likelihood. This maximization process likely pushes the generated images closer to the dominant modes of the distribution \cite{song2021maximum, song2020improved}, which has been discussed but has not been explicitly demonstrated as a failure mode by \citet{yamaguchi2023limitation}.

\end{document}